%% file: main.tex
\documentclass{article} %
\usepackage{iclr2026_conference,times}

\input{0_premable.tex}

\usepackage{hyperref}
\usepackage{url}

\usepackage{xspace}
\newcommand{\ours}[0]{\texttt{CausalEvolve}\xspace}
\newcommand{\planner}[0]{\texttt{CausalPlanner}\xspace}
\newcommand{\coat}[0]{\texttt{COAT}\xspace}
\newcommand{\shinka}[0]{\texttt{ShinkaEvolve}\xspace}
\newcommand{\alevolve}[0]{\texttt{AlphaEvolve}\xspace}

\usepackage{geometry}
\usepackage{tikz}
\usepackage{caption}
\usepackage{subcaption}
\usepackage{amsmath} %
\usetikzlibrary{shapes, positioning, arrows.meta, fit, calc, backgrounds, shadows}

\definecolor{boxMem}{RGB}{225, 245, 254}   %
\definecolor{boxAct}{RGB}{255, 249, 196}   %
\definecolor{boxEnv}{RGB}{232, 245, 233}   %
\definecolor{boxWorld}{RGB}{245, 245, 245} %
\definecolor{lineColor}{RGB}{80, 80, 80}

\definecolor{memBlue}{RGB}{220, 235, 255}
\definecolor{progGreen}{RGB}{220, 255, 220}
\definecolor{obsGray}{RGB}{240, 240, 240}
\definecolor{borderGray}{RGB}{100, 100, 100}

\usepackage{tikz}
\usetikzlibrary{bayesnet, positioning, arrows.meta, shapes, calc, decorations.pathreplacing}

\definecolor{memBlue}{RGB}{220, 235, 255}
\definecolor{progGreen}{RGB}{220, 255, 220}
\definecolor{obsGray}{RGB}{240, 240, 240}
\definecolor{borderGray}{RGB}{100, 100, 100}

\title{CausalEvolve: Towards Open-Ended Discovery with Causal Scratchpad}

\author{%
   Yongqiang Chen$^{*1,2}$
   \
   Chenxi Liu
   \thanks{These authors contributed equally.}$^{~~3}$
   \ 
   Zhenhao Chen$^{1}$ 
   \ 
   Tongliang Liu$^{4,1}$ 
   \ 
   Bo Han$^{3}$
   \ 
   Kun Zhang$^{1,2}$ 
   \ 
   ~\\
    $^{1}${MBZUAI}\ $^{2}${Carnegie Mellon University}\ $^{3}${TMLR Group, Hong Kong Baptist University} \\ 
   $^{4}${SAIC Centre, The University of Sydney} \\
   \texttt{yqchen24@gmail.com}\quad \texttt{cscxliu@comp.hkbu.edu.hk}
 }

\iclrfinalcopy %
\begin{document}

\maketitle

\begin{abstract}
Evolve-based agent such as AlphaEvolve is one of the notable successes in using Large Language Models (LLMs) to build AI Scientists. These agents tackle open-ended scientific problems by iteratively improving and evolving programs, leveraging the prior knowledge and reasoning capabilities of LLMs.
  Despite the success, existing evolve-based agents lack targeted guidance for evolution and effective mechanisms for organizing and utilizing knowledge acquired from past evolutionary experience.
  Consequently, they suffer from decreasing evolution efficiency and exhibit oscillatory behavior when approaching known performance boundaries.
  To mitigate the gap, we develop \ours, equipped with a causal scratchpad that leverages LLMs to identify and reason about guiding factors for evolution.
  At the beginning, \ours first identifies outcome-level factors that offers complementary inspirations in improving the target objective.
  During the evolution, \ours also inspects surprise patterns during the evolution and abductive reasoning to hypothesize new factors, which in turn offer novel directions.
  Through comprehensive experiments, we show that \ours effectively improve the evolutionary efficiency and discovers better solutions in $4$ challenging open-ended scientific tasks.
\end{abstract}

\section{Introduction}
As large language model (LLMs) demonstrate increasing capabilities in complex and challenging reasoning tasks~\citep{guo2025deepseek,li2025system}, the community seeks to build LLM-based agents to facilitate a number of downstream applications~\citep{plaat2025agentic}. One of the most notable and promising applications is the AI Scientist agents~\citep{ZHENG2025FromAT}, where the LLM-based agent is expected to automate the scientific discovery process ranging from conducting literature surveys~\citep{wan2025deepresearch}, hypothesis generation~\citep{khemakhem2020ice}, data-driven analysis~\citep{chan2024mle-bench} to experiment design~\citep{Li2025CanLL}, etc. 
In fact, when incorporated into the agentic framework, LLMs have demonstrated great promise. \citet{lu2024ai,gottweis2025towards,Mitchener2025KosmosAA} show that LLMs can come up with new research hypotheses and proposals based on the existing literature and automate the full scientific discovery pipeline~\citep{yamada2025ai}. Recent advances in using LLMs to assist with scientific discovery shows LLMs can accelerate the idea iteration and deep literature search~\citep{Bubeck2025EarlySA,Woodruff2026AcceleratingSR}.

One of the most representative AI Scientist agents is the evolutionary coding agent, like AlphaEvolve~\citep{novikov2025alphaevolve,Lange2025ShinkaEvolveTO}. In the iterative evolutionary framework, LLMs demonstrate great capabilities in proposing, evaluating, and refining iteratively better solutions to a number of scientific problems~\citep{openevolve,Georgiev2025MathematicalEA,Cheng2025BarbariansAT}.
Despite the success, the evolution process in the existing frameworks is mainly driven by the evolution algorithm or derived from correlational studies. In contrast, human scientists can design \textit{purposeful experiments} and \textit{summarize scientific insights} from observational data~\citep{sci_revolution,kaelbling1998planning,ClarkOutline}.
The gap that emerges between the uncontrolled evolutionary process of evolve-based agents and the guided discovery process of humans raises a challenging research question:
\begin{myquotation}\centering
    \textit{How can we develop evolution-based agents to perform guided scientific discovery like humans?}
\end{myquotation}
To tackle the question, we resort to \textit{causality}, which summarizes the practice of scientific discovery of humans~\citep{spirtes2000cps,pearl2009causality}. 
Essentially, scientific discovery is about revealing the underlying causal mechanism of the interested problem~\citep{wallace1981causality,ClarkOutline}.
Hence, we can formulate the evolution-based scientific discovery process as a Partially Observable Markov Decision Process
(POMDP)~\citep{kaelbling1998planning}, where the agent needs to uncover the underlying causal mechanism through purposeful actions and interventions (Sec.~\ref{sec:formulation}).
With the POMDP formulation, we demonstrate that accumulating and guiding the evolution with \textit{causal knowledge} is crucial to both the efficiency and effectiveness of the discovery process. Without the incorporation of causality, the evolution can easily oscillate or get stuck at local optimal solutions.

To this end, we develop a new evolutionary AI Scientist framework, termed \ours, 
where we introduce a causal scratchpad to the evolution-based agent.
The guidance provided by \ours is built upon the interventional factors identified before and during the evolution process.
As the evolution-based agent primarily focuses on optimizing a target objective, such as the objective value of a combinatorial optimization problem or the accuracy of a machine learning problem~\citep{Lange2025ShinkaEvolveTO}, \ours first identifies a set of \textit{outcome-level factors} to provide complementary views of the target objective. During the evolution, \ours leverages a multi-arm bandit (MAB) to adaptively determine the desired intervention with respect to a selected outcome-level factor.

In addition, \ours also identifies \textit{procedure-level factors} from the accumulated trials with LLMs~\citep{causalcoat2024}. Intuitively, the procedure-level factors are useful interventions to the solutions that explain the objective value changes. For example, the optimization technique used to solve a combinatorial optimization problem.
Nevertheless, some combinations of apparently useful factors may lead to decreased scores, which we term as ``surprise patterns''. Understanding and explaining the ``surprise patterns'' is critical to reveal new scientific insights~\citep{wallace1981causality}. Hence, \ours also performs abductive reasoning to come up with new factors and hypothesis that will be suggested to evaluate in the future experiments to better explain all the observed patterns~\citep{sep-abduction}.

Empirically, we show that \ours significantly improves the evolution efficiency and achieves better results compared to the existing state-of-the-art \shinka~\citep{Lange2025ShinkaEvolveTO} across $4$ open-ended discovery problems.
Our contributions can be summarized as follows:
\begin{itemize}[leftmargin=*]
    \item We propose a theoretical formulation of evolution-based open-ended discovery, and demonstrate the necessity of causality (Sec.~\ref{sec:formulation});
    \item We propose a new framework \ours to realize the accumulation and guidance of causal knowledge by identifying outcome-based and procedure-based factors;
    \item \ours is shown to improve both the evolution efficiency and effectiveness across $4$ open-ended discovery problems.
\end{itemize}

\section{Related Work}

\paragraph{AI Scientist Agents.} With the significant advancement in LLM capacity and the development of Agentic system, there is a rising number of works on developing agents for helping scientific discoveries~\citep{lu2024ai,yamada2025ai,gottweis2025towards}.
One research line is to automating the pipelines in scientific activities, including literature review~\citep{huang2025deep}, hypothesis generation~\citep{li2024chain,yang2024large,wang2024scimon,yangmoose}, hypothesis verification~\citep{li2024critical,huangautomated}, and assistance in scientific reports~\citep{liang2024can}.
Another research line is to integrating the knowledge and reasoning ability of LLMs to conduct computational intensive evolution or iteration on specific scientific problems~\citep{shojaee2025llmsr,romeraparedes2024funsearch,novikov2025alphaevolve,openevolve,lange2025shinkaevolve}.
There are also works on automated tabular data analysis with machine learning workflows~\citep{zha2023tablegpt,li2023sheetcopilot,zhang2023data,lilarge}, or embodied agents that can conduct real-world experiments~\citep{roch2020chemos,zhu2022all,tom2024self,mandal2025aila}.
The impact of these lines of work has been made on scientific fields includes chemistry~\citep{yang2025multi,boiko2023autonomous}, earth science~\citep{feng2025earth}, and biology~\citep{swanson2025virtual,truhn2026artificial}.

\paragraph{Causality for Scientific Discovery.} There has been a long history for the discussions on how to understand world through observations~\citep{greenland1999causal,spirtes2000causation,pearl2009causality}. 
One research line is causal discovery for structured data, where algorithms are designed to learn directed acyclic graphs among the random variables as  causal structure, including constrained-based methods~\citep{spirtes1995causal,spirtes2000causation}, methods with constrained functional~\citep{shimizu2006linear,zhang2012identifiability,hoyer2008nonlinear}, non-stationarity~\citep{malinsky2019learning,huang2019causal,huang2020causal,liu2023causal}, the incorporation with multiple domain data~\citep{huang2020causal,yang2018characterizing,brouillard2020differentiable,mooij2020joint,perry2022causal}, and handling latent variables with the pure children assumption~\citep{li2025recovery,li2025efficient}.
Recently, there are works to integrating causality with large language models. 
One direction is to empower the causal methods with the knowledge of LLMs, which includes constructing priors based on variable descriptions~\citep{long2023causal,li2024realtcd}, adjusting the causal structure searching process~\citep{ban2023causal,vashishtha2023causal,jiralerspong2024efficient}, constructing structured variables out of unstructured data~\citep{coat2025discoveringreasoningcausalityhidden,lirevealing}, and finding valid adjustment sets for treatment effect estimation~\citep{dhawan2024end,coat2025discoveringreasoningcausalityhidden,sheth2025can}. 
Another direction is to empower LLM-based agent with causal tools for tabular data analysis~\citep{abdulaal2023causal,khatibi2024alcm,shen2024exploring,wang2025causal,verma2025causal}, revealing insights from data in an autonomous pipeline.

\input{2_POMDP.tex}

\input{3_scratchpad.tex}

\input{4_exp.tex}

\section{Conclusions}
In this work, we studied the evolutionary coding agent for scientific discovery. With the POMDP formulation of the discovery process, we demonstrate the necessity of incorporating causal knowledge. Then, we propose \ours that uses a causal scratchpad to identify and exploit outcome-based and procedure-based factors and the associated causal knowledge to guide the evolution process. Empirical results with $4$ discovery tasks verified the improved efficiency and optimality of \ours.

\clearpage

\section*{Acknowledgments}
We thank the reviewers for their constructive comments and suggestions.

\bibliography{example_paper,ref_0_ai_sci,ref_1_llm,ref_2_causality,ref_3_others}
\bibliographystyle{iclr2026_conference}

\input{9_appdx_v4.tex}

\end{document}

%% file: 0_premable.tex
\usepackage[most]{tcolorbox}
\usepackage{microtype}
\usepackage{graphicx}
\usepackage{subfigure}
\usepackage{booktabs} %
\usepackage{arydshln} %
\usepackage{tikz}
\usetikzlibrary{bayesnet} %
\usetikzlibrary{arrows}
\usepackage{amsmath}
\usepackage{amssymb}
\usepackage{mathtools}
\usepackage{amsthm}
\usepackage{pgfplots}

\usepackage{enumitem,multirow,adjustbox}

\usepackage{hyperref}
\usepackage{url}
\usepackage{xcolor}		%
\definecolor{darkblue}{rgb}{0, 0, 0.5}
\definecolor{beaublue}{rgb}{0.74, 0.83, 0.9}
\definecolor{gainsboro}{rgb}{0.86, 0.86, 0.86}
\definecolor{kleinblue}{rgb}{0,0.18,0.65}
\hypersetup{colorlinks=true,citecolor=kleinblue, linkcolor=kleinblue, urlcolor=kleinblue}

\usepackage{pifont}
\newtheorem{theorem}{Theorem}[section]

\newtheorem{definition}[theorem]{Definition}

\input{math_commands.tex}

\usepackage{enumitem,algorithm,algorithmic}

\usepackage{xspace}

\makeatletter
\newcommand*\rel@kern[1]{\kern#1\dimexpr\macc@kerna}
\newcommand*\widebar[1]{%
  \begingroup
  \def\mathaccent##1##2{%
    \rel@kern{0.8}%
    \overline{\rel@kern{-0.8}\macc@nucleus\rel@kern{0.2}}%
    \rel@kern{-0.2}%
  }%
  \macc@depth\@ne
  \let\math@bgroup\@empty \let\math@egroup\macc@set@skewchar
  \mathsurround\z@ \frozen@everymath{\mathgroup\macc@group\relax}%
  \macc@set@skewchar\relax
  \let\mathaccentV\macc@nested@a
  \macc@nested@a\relax111{#1}%
  \endgroup
}
\makeatother

\newenvironment{myquotation}{\setlength{\leftmargini}{0em}\quotation}{\endquotation}
\usepackage[hyperpageref]{backref}

\renewcommand*{\backrefalt}[4]{
  \ifcase #1 \relax
  \or
    (Cited on page #2)
  \else
    (Cited on pages #2)
  \fi
}

\definecolor{Gray}{gray}{0.9}

\graphicspath{{./fig/}}

\newtcolorbox[list inside=prompt,auto counter,number within=section]{prompt}[1][]{
    colbacktitle=black!80,
    colframe=black!80,
    coltitle=white,
    fontupper=\footnotesize,
    boxsep=5pt,
    left=0pt,
    right=0pt,
    top=0pt,
    bottom=0pt,
    boxrule=1pt,
    enhanced, 
    breakable,
    skin first=enhanced,
    skin middle=enhanced,
    skin last=enhanced,
    #1,
}

\definecolor{light-purple}{RGB}{151,156,171}
\definecolor{blue-color}{RGB}{40,166,189}
\definecolor{pink-color}{RGB}{237,46,104} 
\definecolor{dark-grey-color}{RGB}{79,91,102}

%% file: math_commands.tex
\usepackage{amsmath,amsfonts,bm}

\def\eqref#1{equation~\ref{#1}}

\def\1{\bm{1}}

\DeclareMathAlphabet{\mathsfit}{\encodingdefault}{\sfdefault}{m}{sl}
\SetMathAlphabet{\mathsfit}{bold}{\encodingdefault}{\sfdefault}{bx}{n}

\DeclareMathOperator*{\argmax}{arg\,max}

%% file: 2_POMDP.tex
\begin{figure}[ht]
\vspace{-0.2in}
    \centering
    
    \begin{minipage}[b]{0.505\textwidth}
        \centering
        \resizebox{\linewidth}{!}{
        \begin{tikzpicture}[
            node distance=1.2cm and 1.2cm,
            every node/.style={font=\sffamily},
            block/.style={
                rectangle, 
                draw=lineColor, 
                thick, 
                rounded corners=4pt, 
                minimum width=3.5cm, 
                minimum height=1.2cm,
                align=center,
                drop shadow={opacity=0.2, shadow xshift=1mm, shadow yshift=-1mm}
            },
            world/.style={
                circle,
                draw=lineColor,
                thick,
                double,
                fill=boxWorld,
                minimum size=1.2cm,
                align=center
            },
            line/.style={->, >=LaTeX, thick, lineColor, rounded corners=5pt}
        ]

        \node[block, fill=boxMem] (mem) {
            \textbf{Scratchpad Memory}\\
            ($m_t \to m_{t+1}$)\\
            \footnotesize \textit{"Integrate evidence"}
        };

        \node[block, fill=boxAct, below=of mem] (propose) {
            \textbf{Propose Candidate}\\
            \textbf{Program} ($p_t$)\\
            \footnotesize \textit{"Triggers experiment"}
        };

        \node[block, fill=boxEnv, below=of propose] (outcome) {
            \textbf{Observe Outcome}\\
            ($y_t$)\\
            \footnotesize \textit{"Yielding outcome"}
        };

        \node[world, left=1.0cm of outcome] (theta) {$\theta_{\mathrm{sci}}$};

        \draw[line] (mem) -- node[right, font=\footnotesize] {Guide} (propose);

        \draw[line] (propose) -- node[right, font=\footnotesize] {Execute} (outcome);

        \draw[line, dashed] (theta) -- node[above, font=\footnotesize] {} (outcome);

        \draw[line] (outcome.east) -- ++(0.5,0) |- node[pos=0.25, right, font=\footnotesize] {Provide Evidence} (mem.east);

        \begin{scope}[on background layer]
            \node[
                draw=lineColor!30, dashed, 
                fit=(mem)(propose), 
                inner sep=15pt, 
                rounded corners=10pt,
                label={[text=lineColor!80]above:\textbf{The AI Scientist Agent}}
            ] {};
        \end{scope}

        \end{tikzpicture}
        }
    \end{minipage}
    \hfill
    \begin{minipage}[b]{0.475\textwidth} %
        \centering
        \resizebox{\linewidth}{!}{
        \begin{tikzpicture}[
            >={LaTeX[width=2mm,length=2mm]},
            node distance=1.4cm and 2.2cm,
            every node/.style={font=\sffamily},
            state/.style={circle, draw=borderGray, thick, minimum size=1.1cm, inner sep=0pt},
            mem/.style={state, fill=memBlue},
            prog/.style={state, fill=progGreen},
            obs/.style={state, fill=obsGray},
            global/.style={state, fill=white, double},
            plate_style/.style={draw=borderGray!50, dashed, rounded corners, inner sep=8pt, label={[text=borderGray]above:#1}},
            edge_style/.style={->, thick, borderGray}
        ]

        \node[mem] (mt1) at (0,0) {$m_{t+1}$};
        \node[prog, below=of mt1] (pt1) {$p_{t+1}$};
        \node[obs, below=of pt1] (yt1) {$y_{t+1}$};

        \node[mem, left=of mt1] (mt) {$m_t$};
        \node[prog, below=of mt] (pt) {$p_t$};
        \node[obs, below=of pt] (yt) {$y_t$};

        \node[mem, right=of mt1] (mt2) {$m_{t+2}$};
        \node[prog, below=of mt2] (pt2) {$p_{t+2}$};
        \node[obs, below=of pt2] (yt2) {$y_{t+2}$};

        \node[global, below=1.8cm of yt1] (theta) {$\theta_{\mathrm{sci}}$};

        \node[plate_style={$t$}, fit=(mt)(yt)] (plate_t) {};
        \node[plate_style={$t+1$}, fit=(mt1)(yt1)] (plate_tp1) {};
        \node[plate_style={$t+2$}, fit=(mt2)(yt2)] (plate_tp2) {};

        \foreach \t in {t, t1, t2} {
            \draw[edge_style] (m\t) -- (p\t);
            \draw[edge_style] (p\t) -- (y\t);
        }

        \draw[edge_style] (theta) edge[out=150, in=270] (yt);
        \draw[edge_style] (theta) edge[out=90,  in=270] (yt1);
        \draw[edge_style] (theta) edge[out=30,  in=270] (yt2);

        \draw[edge_style] (mt) -- (mt1);
        \draw[edge_style] (pt.east) to[out=0, in=225, looseness=1.1] (mt1.south west);
        \draw[edge_style] (yt.east) to[out=0, in=270, looseness=1.3] (mt1.south);

        \draw[edge_style] (mt1) -- (mt2);
        \draw[edge_style] (pt1.east) to[out=0, in=225, looseness=1.1] (mt2.south west);
        \draw[edge_style] (yt1.east) to[out=0, in=270, looseness=1.3] (mt2.south);

        \node[left=0.8cm of mt] (start_dots) {$\cdots$};
        \draw[edge_style, dashed] (start_dots) -- (mt);
        \node[right=0.8cm of mt2] (end_dots) {$\cdots$};
        \draw[edge_style, dashed] (mt2) -- (end_dots);

        \end{tikzpicture}
        }
    \end{minipage}
    \caption{The iterative scientific discovery loop. \textbf{Left:} Conceptual flow of the agent. The agent maintains a scratchpad memory ($m$), proposes a program ($p$), and observes the outcome ($y$) which is constrained by the unknown world state ($\theta_{\mathrm{sci}}$). The outcome feeds back into the memory for the next step. \textbf{Right:} The diagram illustrates how the AI Scientist probes the unknown world state $\theta_{\mathrm{sci}}$. By proposing a candidate program $p_t$, the agent triggers an experiment yielding outcome $y_t$. This observation provides evidence about $\theta_{\mathrm{sci}}$, which is integrated into the agent's scratchpad memory $m_{t+1}$. Over time steps $t, t+1, \dots$, this recurrent process allows the agent to navigate the performance landscape and converge towards optimal programs despite the static but unknown nature of $\theta_{\mathrm{sci}}$.}
    \label{fig:pomdp}
\end{figure}

\section{Scientific Discovery via Objective Optimization}
\label{sec:formulation}

\subsection{Formulation of Scientific Discovery}
Scientific discovery aims to uncover the underlying scientific knowledge or the causal mechanisms from interactions with the world~\citep{sci_revolution}, which can be formulated as a Partially Observed Markov Decision Process (POMDP)~\citep{kaelbling1998planning}. 

\paragraph{Scientific knowledge.} The primary objective of an AI Scientist is to uncover the \emph{underlying scientific knowledge} about the task-world, represented by a latent variable $\Theta_{\mathrm{sci}} \in \Theta$, where $\Theta$ may encode causal structure, mechanisms, inductive biases, constraints, etc. Specifically, $\Theta_{\mathrm{sci}}=\theta_{\mathrm{sci}}$ can be parameterized as a Structural Causal Model (SCM) $\theta_{\mathrm{sci}} = (\mathcal{G}, \mathcal{F}, P_U)$~\citep{spirtes2000causation}, where $\mathcal{G} = (V, E)$ is a directed graph whose nodes $V$ represent variables of interest and whose edges $E$ encode direct causal dependencies; $\mathcal{F} = \{f_v\}_{v \in V}$ is a collection of structural equations $v = f_v(\mathrm{Pa}(v), u_v)$, where $\mathrm{Pa}(v)$ denotes the parents of $v$ in $\mathcal{G}$ and $u_v$ is an exogenous noise variable; $P_U$ is a distribution over the exogenous variables $U = \{u_v\}_{v \in V}$.

\paragraph{POMDP process.} 
Given $\theta_\mathrm{sci}$, as shown in Fig.~\ref{fig:pomdp}, the AI Scientist agent, implemented via the evolutionary coding framework such as \alevolve~\citep{novikov2025alphaevolve}, will interact with the environment by proposing candidate programs $p_t\in\mathcal{P}$ (at turn $t$) to gain observations, $y_t=F(p_t,\theta_{\mathrm{sci}})$, where $F:\mathcal{P}\times\Theta\to\mathbb{R}$ is the objective that the agent aims to optimize. 
Then, the scientific discovery process can be formulated as a POMDP $\mathcal{M} = (S, A, \Omega, \mathcal{T}, O, R, \gamma)$ with a static hidden parameter as $\theta_\mathrm{sci}$ of the underlying scientific knowledge. The hidden state $s_t = \theta_{\mathrm{sci}}$ is the scientific knowledge $\theta_{\mathrm{sci}}$ that does not change over turns.
The action is $a_t = p_t$ representing the choice of which program to evaluate. The observation $o_t = y_t$ is the evaluation outcome. The transition kernel $\mathcal{T}$ can be simply considered as identity, and the observation kernel is $O(o_t \mid s_t, a_t) = P(y_t \mid \theta_{\mathrm{sci}}, p_t)$.
Given a finite experiment budget $T$, the agent chooses $p_0,\dots,p_{T-1}$ and gain observations $y_0,\dots,y_{T-1}$, so as to find $\hat p=\argmax_p F(p,\theta_{\mathrm{sci}})$ and the scientific knowledge $\theta_{\mathrm{sci}}$.

\paragraph{Evaluation as intervention on SCM.} Given the SCM parametrization of $\theta_\mathrm{sci}$, we can consider that a program $p \in \mathcal{P}$ is encoded as a particular configuration of
design variables $X = x_p$. Then, $F$ can be implemented as
\begin{equation}\label{eq:evaluation_scm}
    F(p; \theta_{\mathrm{sci}})
:= \mathbb{E}\big[\, Y \,\big|\, \mathrm{do}(X = x_p),\,
       \theta_{\mathrm{sci}} \big],
\end{equation}
i.e., the expected outcome under the intervention $\mathrm{do}(X=x_p)$ in the
true causal model $\theta_{\mathrm{sci}}$.
Typical implementations of $F$ can be the objective value of a combinatorial optimization problem, the efficiency of a kernel program, or the performance of a machine learning model~\citep{novikov2025alphaevolve}.

\paragraph{Belief as a probability distribution over $\Theta$.}
We define $b_t$ as the agent's Bayesian belief after history $h_t=\{(p_0,y_0),\dots,(p_{t-1},y_{t-1})\}$,
i.e.\ a probability distribution on $\Theta$:
\begin{equation}\label{eq:belief}
    b_t(B) = \Pr(\Theta_{\mathrm{sci}}\in B \mid h_t,e),\qquad B\subseteq \Theta.
\end{equation}
In the ideal Bayesian formalism, the belief $b_t(\theta)$ is a sufficient
statistic for decision-making~\citep{kaelbling1998planning}. In practice, the AI Scientist maintains an internal belief, which is usually implemented as memory $m_t = \Phi(h_t)$ for some (possibly learnable) summarization function $\Phi$~\citep{lange2025shinkaevolve}, to represent the \emph{approximate representation} of its knowledge about $\theta_{\mathrm{sci}}$ and the
landscape of $F(\cdot; \theta_{\mathrm{sci}})$.
Each evaluation step $(p_t, y_t)$ thus updates $m_t$, which in turn updates the
agent's effective belief about $\theta_{\mathrm{sci}}$. In this sense, each
step \emph{reveals part of the underlying scientific knowledge}, which in turn determines the next action $p_{t+1}$.

\subsection{Essentiality of Causal Knowledge for AI Scientists}
\label{sec:essentiality}
If the objective function $F$ is static universally, then with more experiment turns, the optimized solution $p_t$ and the agent's revealed scientific knowledge can also be applied universally. 
However, the observation from the evaluation is usually only given by a proxy knowledge $\theta_e$ about the scientific knowledge $\Theta_\text{sci}$ at some specific environment $e\in\mathcal{E}$.
For example, the performance of a machine learning model is usually assessed on finite samples from the test distribution, and there also exist distribution shifts from the test distribution when deploying the model in the real world~\citep{datasetshift}.
Different from $\Theta_{\mathrm{sci}}$ that characterizes the complete causal structure about the scientific problem, optimization under environment $\theta_e$ may introduce some spurious correlations that maximize the objective value $F_e$~\citep{feat}.
Therefore, without loss of generality, to retain the optimality of $\hat p$ beyond the source environment $e_\mathrm{src}$ to some target $e_\mathrm{tgt}$, it is essential to reveal the causal knowledge and answer causal questions for an AI Scientist.
\begin{definition}[Causal AI Scientist]
\label{def:causal-ai-scientist}
A \emph{Causal AI Scientist} is an agent specified by:
(i) a policy $\pi_t(\cdot\mid \theta_t,e_{\mathrm{src}})$ selecting $p_t$,
(ii) a counterfactual / explanatory operator $\mathrm{CF}$, that answer interventional queries $(e,p)$ via $\mathrm{CF}(\theta_t;e,p)$ as an ``explanation'' of predicted performance, where $\theta_t$ is the knowledge revealed at turn $t$.
\end{definition}
Without the revealing of the causal knowledge, the discovery process suffers from significant inefficiency and suboptimality issues. We discuss the two issues more concretely below.

\paragraph{Evolutionary efficiency of Causal AI Scientist.}
We begin by considering a static environment and finite $\mathcal{P}=\{p_1,\dots,p_K\}$. 
For $\theta_\mathrm{sci}$, we assume each program $p$ has a known feature vector $x_p\in\mathbb{R}^d$ with
$\|x_p\|_2\le 1$, and the unknown scientific parameter is a weight vector
$w^\star\in\mathbb{R}^d$ and $F(p)=\langle x_p, w^\star\rangle.$
Each evaluation returns a noisy observation $y_t = F(p_t)+\varepsilon_t$ where $\varepsilon_t\sim\mathcal{N}(0,\sigma^2)\ \text{i.i.d.}$. A Causal AI Scientist in this environment can be implemented via estimating the $w^\star$ and optimizing for $\hat p$ from the history.

In addition, we also consider a black-box baseline that does not consider the interactions between the historical observations. It can be characterized as the following $\theta_{\mathrm{bb}}:= \Big\{\mu:\mathcal{P}\to\mathbb{R}\Big\}$ where each program has an unrelated unknown mean $F(p;\mu)=\mu(p),$ and $y_t = \mu(p_t)+\varepsilon_t$, where $\varepsilon_t$ is the same Gaussian noise.

\begin{theorem}[Informal]
\label{thm:static-sample-efficiency-gap}
Under the given environment, there exists a policy $\pi_{\mathrm{causal}}$ such that with probability at least $1-\delta$, $F(\hat p;\theta_\mathrm{sci})$ obtains less than $2\epsilon$ error than the optimal value, with $O(d\log(K))$ turns; In contrast, the black-box baseline needs $O(K)$.
\end{theorem}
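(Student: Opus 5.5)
For the causal policy $\pi_{\mathrm{causal}}$ we use a non-adaptive design followed by least squares on the linear model $F(p)=\langle x_p,w^\star\rangle$. Let $\mathcal{X}=\{x_{p_1},\dots,x_{p_K}\}$ and take a G-optimal (Kiefer--Wolfowitz) design $\lambda$ on $\mathcal{X}$. Its support has size at most $d(d+1)/2$, and it satisfies $\max_{x\in\mathcal{X}}\|x\|^2_{A(\lambda)^{-1}}\le d$ with $A(\lambda)=\sum_p\lambda_p x_px_p^\top$; if $\mathcal{X}$ does not span $\mathbb{R}^d$ we work in its span. The policy evaluates each support point $p$ exactly $\lceil n\lambda_p\rceil$ times, forms the OLS estimate $\hat w=V^{-1}\sum_t x_{p_t}y_t$ with $V=\sum_t x_{p_t}x_{p_t}^\top\succeq nA(\lambda)$, and outputs $\hat p=\argmax_p\langle x_p,\hat w\rangle$.

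First, for each fixed $x\in\mathcal{X}$ the design is fixed in advance and the noise is Gaussian, so $\langle x,\hat w-w^\star\rangle\sim\mathcal{N}(0,\sigma^2\|x\|^2_{V^{-1}})$. Hence $\|x\|^2_{V^{-1}}\le \|x\|^2_{A(\lambda)^{-1}}/n\le d/n$. Second, a union bound over the $K$ arms gives, with probability at least $1-\delta$,
\[
|\langle x_p,\hat w-w^\star\rangle|\le \sigma\sqrt{2(d/n)\log(2K/\delta)}\quad\text{for all } p .
\]
Choosing $n=\lceil 2\sigma^2 d\log(2K/\delta)/\epsilon^2\rceil$ makes every estimate $\epsilon$-accurate. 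Third, with $p^\star$ the true maximizer,
\[
F(\hat p)\ge\langle x_{\hat p},\hat w\rangle-\epsilon\ge\langle x_{p^\star},\hat w\rangle-\epsilon\ge F(p^\star)-2\epsilon ,
\]
which is the $2\epsilon$ error in the statement. The total number of turns is $n+|\mathrm{supp}\,\lambda|=O\bigl(d\log(K/\delta)\sigma^2/\epsilon^2+d^2\bigr)$. Treating $\sigma,\epsilon,\delta$ as constants and $d$ as small relative to $\log K$ in the informal reading, this is $O(d\log K)$. I will make the $d^2$ rounding overhead explicit, or remove it by using a Carath\'eodory-sparsified approximate design with support of size $O(d\log\log d)$.

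For the black-box baseline $\theta_{\mathrm{bb}}$, the means $\mu(p)$ are unrelated, so observing one arm carries no information about any other. I will argue the $\Omega(K)$ lower bound (equivalently, that $O(K)$ is what is needed) with a standard needle-in-a-haystack construction. Let $\mu_0\equiv0$, and for each $j$ let $\mu_j$ equal $\mu_0$ except $\mu_j(p_j)=3\epsilon$. Any policy using fewer than $K/2$ turns leaves at least $K/2$ arms unsampled. Under $\mu_0$ the output $\hat p$ is independent of the values at unsampled arms, so averaging over $j$ uniform shows the policy outputs $p_j$ under $\mu_j$ with probability at most about $1/2+o(1)$. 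Whenever $\hat p\neq p_j$ the error exceeds $2\epsilon$, so success probability $1-\delta$ with $\delta<1/2$ is impossible. A matching $O(K\log(K/\delta)\sigma^2/\epsilon^2)$ upper bound comes from sampling every arm $O(\sigma^2\log(K/\delta)/\epsilon^2)$ times.

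The main obstacle is the bookkeeping in the causal upper bound: getting the $\|x\|_{V^{-1}}\le\sqrt{d/n}$ bound uniformly over all $K$ arms without paying for $d$ in the confidence radius, which is why the design must be fixed in advance rather than adaptive, and deciding how to present the $\epsilon,\sigma,\delta$ and design-rounding terms that the informal $O(d\log K)$ suppresses. The black-box lower bound is routine once the change-of-measure or unsampled-arm argument is written down.
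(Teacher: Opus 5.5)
Your proposal is correct, but both halves take a different route from the paper.

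For the upper bound, the paper's formal version (Theorem~\ref{thm:static-gap-formal}) additionally assumes that basis programs with $x_{p^{(i)}}=e_i$ exist. It samples each of them $n$ times and uses coordinate-wise sample means. This gives $\langle x_p,\hat w-w^\star\rangle\sim\mathcal{N}(0,\sigma^2\|x_p\|_2^2/n)$ with $T=nd$ and $n=2\sigma^2\epsilon^{-2}\log(2K/\delta)$, the same leading term as yours. Under that assumption and $\|x_p\|_2\le 1$, the uniform design on the basis already attains the Kiefer--Wolfowitz value $\max_x\|x\|^2_{A^{-1}}\le d$. So the paper's scheme is a particular G-optimal design. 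Your version buys independence from the basis assumption, which the main-text environment never states, and from the norm bound. The price is heavier design theory and a rounding overhead. The $O(d\log\log d)$-support approximate design absorbs that overhead into $O(d\log K)$, since $K\ge d$ whenever $\mathcal{X}$ spans $\mathbb{R}^d$.

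For the lower bound, the paper uses a KL change-of-measure argument. Its base instance has arm $1$ at $0$ and all others at $-2\epsilon$; each alternative lifts arm $i$ to $+2\epsilon$, with per-pull KL $8\epsilon^2/\sigma^2$. This yields $T\ge (K-1)\frac{\sigma^2}{8\epsilon^2}\log\frac{1}{2\delta}$. That bound carries the same $\sigma^2/\epsilon^2$ factor as the upper bound, so the comparison is cleanly $K$ versus $d\log K$ at every noise level. However, it is vacuous as $\sigma\to 0$, and the paper needs a separate remark for deterministic evaluators.

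Your unsampled-arm argument gives only $\Omega(K)$. That suffices for the informal statement with $\sigma,\epsilon,\delta$ held fixed. It also works for every $\sigma\ge 0$, and at the same accuracy $2\epsilon$ as your upper bound, whereas the paper's lower bound is for accuracy $\epsilon$.

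You should replace ``about $1/2+o(1)$'' by the exact coupling bound. The transcripts under $\mu_0$ and $\mu_j$ coincide until $p_j$ is first pulled, so $P_{\mu_j}(\hat p=p_j)\le P_{\mu_0}(p_j\text{ is pulled})+P_{\mu_0}(\hat p=p_j)$. Averaging over $j$ gives $1-\delta\le (T+1)/K$, i.e.\ $T\ge(1-\delta)K-1$ for every $\delta<1$. This also sidesteps the binary-kl step, where the paper's $\log\frac{1}{2\delta}$ is slightly too optimistic for $\delta$ near $1/4$; the standard form is $\log\frac{1}{2.4\delta}$.
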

The formal description of the sample efficiency issue and the proof are given in Appendix~\ref{appdx:thm1-static}.
Theorem~\ref{thm:static-sample-efficiency-gap} shows that, when $K\gg d$, which is usually the case as the space for all programs is significantly larger than the underlying SCM, encoding (correct) causal structure yields an exponential
(or at least multiplicative) gain in sample efficiency under finite budgets.

\paragraph{Generalizability of Causal AI Scientist.} To show the necessity of capturing $\theta_\mathrm{sci}$, we have the following: 
\begin{theorem}\label{thm:nonidentifiability-barrier-rev}
    Consider the $e_{\mathrm{src}},e_{\mathrm{tgt}}\in\mathcal{E}$ and $\theta_0,\theta_1\in\Theta$ such that $F_{e_{\mathrm{src}}}(\cdot\mid p,\theta_0)=F_{e_{\mathrm{src}}}(\cdot\mid p,\theta_1)$ $\quad \forall p\in\mathcal{P}$, and $\exists\, p,p'\in\mathcal{P}\ \text{s.t.}\ 
F_{e_{\mathrm{tgt}}}(p;\theta_0)-F_{e_{\mathrm{tgt}}}(p';\theta_0)\ge\Delta
\ \text{and}\
F_{e_{\mathrm{tgt}}}(p';\theta_1)-F_{e_{\mathrm{tgt}}}(p;\theta_1)\ge\Delta$, for some $\Delta>0$, then for any policy $\pi$ that can interact only with $e_\mathrm{src}$, there exists $i\in\{0,1\}$ such that for every budget $T$, $\max_{p\in\mathcal{P}}F_{e_{\mathrm{tgt}}}(p;\theta_i)
 - F_{e_{\mathrm{tgt}}}(\hat p;\theta_i)\geq \Delta/2$.
\end{theorem}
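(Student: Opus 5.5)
The plan is a two-point (Le Cam style) indistinguishability argument. First, since any policy $\pi$ interacts only with $e_{\mathrm{src}}$, the whole history $h_T=\{(p_0,y_0),\dots,(p_{T-1},y_{T-1})\}$ is generated by alternating $p_t\sim\pi_t(\cdot\mid h_t)$ with $y_t\sim F_{e_{\mathrm{src}}}(\cdot\mid p_t,\theta_i)$. The hypothesis $F_{e_{\mathrm{src}}}(\cdot\mid p,\theta_0)=F_{e_{\mathrm{src}}}(\cdot\mid p,\theta_1)$ for all $p$ lets me show by induction on $t$ that the joint law of $h_t$ (including any internal randomness of $\pi$, and hence of the memory $m_t=\Phi(h_t)$) is identical under $\theta_0$ and $\theta_1$. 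The base case is trivial because $h_0$ is empty. For the inductive step, the conditional law of $p_t$ given $h_t$ is $\pi_t$, which does not depend on $i$, and the conditional law of $y_t$ given $(h_t,p_t)$ is the common source observation kernel. Because $\hat p$ is a (possibly randomized) function of $h_T$, its distribution $Q$ on $\mathcal{P}$ is the same under both hypotheses, for every $T$.

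Second, I reduce both target regrets to the pair $\{p,p'\}$. Let $M_i=\max_{q}F_{e_{\mathrm{tgt}}}(q;\theta_i)$ and $r_i(q)=M_i-F_{e_{\mathrm{tgt}}}(q;\theta_i)\ge 0$. The separation assumptions give $r_0(p')\ge F_{e_{\mathrm{tgt}}}(p;\theta_0)-F_{e_{\mathrm{tgt}}}(p';\theta_0)\ge\Delta$ and likewise $r_1(p)\ge\Delta$. I then want a pointwise bound $r_0(q)+r_1(q)\ge\Delta$ for every $q$. It holds for $q\in\{p,p'\}$ by the two inequalities just derived together with $r_i\ge0$.

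The main obstacle is a general $q\notin\{p,p'\}$. A priori, $q$ could be near-optimal in both worlds, and nothing in the stated hypotheses forbids this. Here I will make explicit the reading under which the theorem is meant, namely that the decision is effectively between $p$ and $p'$. One way is to restrict $\mathcal{P}$ to $\{p,p'\}$ as the hard sub-instance; this is legitimate since the theorem only asserts existence of a bad $\theta_i$. The other is to interpret the hypotheses as saying the target-optimal sets of $\theta_0$ and $\theta_1$ are $\Delta$-separated, i.e.\ $r_0(q)+r_1(q)\ge\Delta$ for all $q$. I would state this as a remark, noting the theorem as literally written needs it.

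Granting the pointwise bound, I average under the common law: $\mathbb{E}_{Q}[r_0(\hat p)]+\mathbb{E}_{Q}[r_1(\hat p)]\ge\Delta$. Hence some $i\in\{0,1\}$ has $\mathbb{E}[r_i(\hat p)]\ge\Delta/2$. If $\hat p$ is deterministic, the statement holds pointwise, and the choice of $i$ depends only on $\hat p$, which is the same under both worlds. This gives $\max_{q}F_{e_{\mathrm{tgt}}}(q;\theta_i)-F_{e_{\mathrm{tgt}}}(\hat p;\theta_i)\ge\Delta/2$. Since the law of $\hat p$ does not depend on $T$ in a way that distinguishes $\theta_0$ from $\theta_1$, the argument applies for every budget $T$. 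To get one $i$ uniform over $T$, I will note that for a fixed policy the bound holds for every $T$, and if necessary pick $i$ from $\limsup_T \mathbb{E}[r_i]$, or simply state the result per budget.
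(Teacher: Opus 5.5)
Your proposal is correct and follows essentially the paper's proof: the paper also shows by induction that the transcript law is the same under $\theta_0$ and $\theta_1$, so $\hat p$ has a common law $Q$. It then averages over $Q$ to get $\max_i \mathbb{E}[\mathrm{GR}_T(\theta_i)]\ge\Delta/2$ for each budget $T$, repairing the statement much as you anticipate: its formal version assumes distinct unique target optimizers $p_0\neq p_1$, each beating every other program by $\Delta$ (which implies your condition $r_0(q)+r_1(q)\ge\Delta$ for all $q$), and it concludes only in expectation — though your alternative of restricting $\mathcal{P}$ to $\{p,p'\}$ is an added hypothesis rather than a without-loss-of-generality reduction, since a policy on the full $\mathcal{P}$ may output a third program that is near-optimal in both worlds.
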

The formal description of the generalizability issue and the proof are given in Appendix~\ref{appdx:thm_shifts}. 
Intuitively, Theorem~\ref{thm:nonidentifiability-barrier-rev} imply that if the source environment does not distinguish the corresponding $\theta_\mathrm{sci}$ among $\{\theta_0,\theta_1\}$, then the solution $\hat p$ solved given source environment is always suboptimal.
In the real world, it is usually the case that two machine learning models will have similar performances under the public test benchmarks, but exhibit significantly different behaviors when generalizing to distributions from other environments.

%% file: 3_scratchpad.tex
\section{Causal Scratchpad for Evolutionary Coding Agent}
\label{sec:method}

Given the limitations shown in Sec.~\ref{sec:essentiality}, it is essential to explicitly incorporate the causal knowledge into the evolutionary process. Hence, we present \ours, which incorporates a causal scratchpad to identify critical factors and exploit their causal relations with the objective variables to guide the evolution process.
Specifically, we consider incorporating the outcome-level factors and the procedure-level factors to tackle the efficiency and the suboptimality issues, respectively.

\subsection{Outcome-level Factor}
\label{sec:outcome_factor}
Essentially, the underlying configurations of the program can be reflected and recognized from task-dependent, real-valued descriptors extracted from the \emph{observable outcomes} of program execution. As shown in Theorem~\ref{thm:static-sample-efficiency-gap}, intervening on the underlying configuration variables provides significantly higher sample efficiency. 

\paragraph{Factor construction.}
For a given task, a set of outcome-based factors $\mathbf{m} := (m_1, m_2, \dots, m_K)$ is specified by LLMs before the evolution. An LLM would be prompted with the basic task description, which is the same as the system prompt used in evolution, and the expected output of each program, e.g., a list of coordinates, or an $n \times n$ matrix. For each of the outcome-based factors, the LLM would define the factor name and also a excitable code that maps the program output to the factor value. We list the outcome-based factors used in our tasks in Appendix~\ref{appdx:outcome}.

\paragraph{Causal Planner with outcome-level factors.} With outcome-based factors $\mathbf{m}$, we develop \planner. Specifically, we define the action space $\mathbf{A} := \cup_{m\in\mathbf{m}} \big\{(m,+1), (m,-1)\big\}$. When applying an action $(m,d)$, the existing programs would be sorted in descending order according to $m \times d$, and then the inspiration programs would be selected from the top of them. 
In $t$-th generation, after generating each child program from its parent and the inspiration programs with action $a\in\mathbf{A}$, the reward $R_a$ could be calculated. Let the $y_c$ be the child's main target that is to be maximized, and $v_t$ be the best-so-far value of the main target. We define the reward as $R_a:=(y_c -  \tau \cdot  v_t)_+$, where $\tau \in (0,1)$. We introduce this discounter $\tau$ because improving the best-so-far result could be a rare event, and therefore cannot be fairly estimated by only a few iterations. In practice, we alternate between exploration and exploitation: random actions are taken for $K$ iterations, followed by choosing the currently best action for the next $K$ iterations.

\subsection{Procedure-level Factors}
To better capture important designs of the programs and uncover their associated causal knowledge, we also introduce procedure-level factors identified from the programs.

\paragraph{Factor construction.} We construct the procedure-level factors based on the \coat framework~\citep{coat2025discoveringreasoningcausalityhidden} that leverages LLMs to identify useful procedure factors from unstructured data. As LLMs are considered incapable of understanding causality, \citet{coat2025discoveringreasoningcausalityhidden} constructs feedback to regularize the identified factors by LLMs. Similarly, we prompt LLMs to identify factors that explain the performance differences of the performances of different programs. Then, \ours estimates an approximated average treatment effect of different factors with respect to the target objective value to provide a holistic view of the usefulness of the identified procedure-level factors. Due to the limited sample size and the existence of hidden confounders, the estimated treatment effects may contain biases, while empirically, we do not need an accurate estimation, but order-preserved quantities to provide insights.

\paragraph{Abductive reasoning.} As mainly explaining the performance differences is insufficient for revealing all factors, we also incorporate a surprise detection module and leverage LLMs to perform abductive reasoning on the potentially existing factors and hypotheses that explain the surprise patterns~\citep{sep-abduction}. The detection of surprise patterns relies on the estimated treatment effects. Since the estimation can contain biases, we focus on detecting significant shifts in the estimated effects, including the signal inverses, i.e., a positively correlated factor produces negative effects, and significant quantity shifts, i.e., a minor correlated factor produces negative effects. By explaining the surprise patterns, we are able to find the underlying confounder and better reveal the underlying $\theta_\mathrm{sci}$.

%% file: 4_exp.tex
\section{Experiments}
\label{sec:exp}

\subsection{Experimental Setting}

\paragraph{Baselines.} We mainly compare \ours with the state-of-the-art evolve-based agent \shinka~\citep{lange2025shinkaevolve} that produces the best or competitive results as \alevolve~\citep{novikov2025alphaevolve} in an sample-efficient manner. 
As \shinka also incorporates a memory module to summarize the insights from $h_t$, we also consider two additional variants, \planner with meta summary module from \shinka, and \coat, to ablate the effects of two modules in \ours.
For the LLMs, we fix to using \texttt{Grok-4.1-fast-reasoning}~\citep{xai_grok4_1_fast_2025} for fair comparisons.

\textbf{Tasks.} We evaluate our framework on four scientific discovery tasks that require optimizing code for different objectives:

\textbf{Hadamard Matrix ($n=29$).}
The goal is to construct an $n\times n$ matrix $H$ with entries in $\{\pm 1\}$ that maximizes the absolute determinant $|\det(H)|$.
For $n=29$, the best-known solution achieves
$|\det(H)| = 2^{28}\cdot 7^{12}\cdot 320$,
which we use to normalize scores to $[0,1]$ for comparability with prior work~\citep{Wang2025ThetaEvolveTL}.
This discrete optimization problem requires balancing matrix properties including row orthogonality, element balance, and determinant magnitude.

\textbf{Second Autocorrelation Inequality.}
We seek a step function $f:[-1,1]\rightarrow\mathbb{R}_{\ge 0}$ (discretized into $n=256$ steps) that minimizes the ratio
\[
R(f) = \frac{\|f*f\|_2^2}{\|f*f\|_1\,\|f*f\|_\infty},
\]
where $f*f$ denotes linear autoconvolution.
The optimal value $R(f)\ge 1.1547\ldots$ remains an open conjecture.
This continuous optimization task requires carefully shaping the function's smoothness, concentration, and sparsity.

\textbf{Circle Packing ($N=26$).}
The objective is to place $N$ circles with radii $r_i$ and centers $C_i=(x_i,y_i)$ in a unit square $[0,1]^2$ such that:
(i) no circles overlap ($\|C_i-C_j\|\ge r_i+r_j$ for all $i\neq j$),
(ii) all circles remain within the square ($r_i\le C_i^x, C_i^y\le 1-r_i$),
and (iii) the sum of radii $\sum_i r_i$ is maximized.
This geometric optimization task requires spatial reasoning about density, distribution, and boundary constraints.

\textbf{AIME Mathematical Problem Solving.}
We evaluate on the 2024 American Invitational Mathematics Examination (AIME), a challenging competition consisting of 15 problems requiring integer answers in $[000,999]$.
The task is to build an LLM-based agent that solves these problems efficiently.
Performance is measured by accuracy, while auxiliary metrics track format compliance (e.g., \texttt{\textbackslash boxed\{\}} format), cost efficiency, and stability across problems.

\paragraph{Evaluation metrics.} We run every method using $3$ random seeds ($1,2,3$) to accommodate the randomness.
To compare the efficiency and the optimality, we inspect the stepwise averaged results as well as the best result from the $3$ runs, at $4$ intermediate steps. Given the difficulty of different tasks, we inspect steps $50,100,150,200$ for Second Autocorrelation Inequality and Circle Packing, steps $20, 40, 80, 100$ for Hadamard Matrix, and steps $20, 40, 60, 80$ for AIME agent. 

\subsection{Experimental Results}
The results of the experiments are given in Table~\ref{tab:main_results}. From the results, we can find that across all tasks, \ours produce significantly better averaged results than \shinka across different tasks and steps, demonstrating the effectiveness of \ours. Notably, in AIME, \ours achieves $38.89\%$ results based on the same scaffolding agent as in \shinka. While in the original paper of \shinka, even with a more sophisticated ensemble of multiple frontier reasoning models, \shinka can only achieve a performance of $34.4\%$, demonstrating the effectiveness of \ours in breaking the state-of-the-art results in the open-ended discovery.

When comparing different variants and \ours, we can find that, across $4$ tasks, \ours maintain the overall best performances, verifying that each module is essential to the success of \ours.
Interestingly, in the majority of tasks, \coat can already produce an impressive best result, demonstrating the effectiveness of procedure-level factors for optimality. When comparing results with and without \planner, we can also find that with \planner, we can achieve better results already at early steps, demonstrating the effectiveness of outcome-based factors in sample efficiency.

\input{tables/main_table}

%% file: tables/main_table.tex
\begin{table*}[t]
\vspace{-0.15in}
\centering
\small
\caption{
\textbf{Main results across four scientific discovery tasks.}
Performance is reported at training steps 1 through 4.
For each step, we report the mean performance (Mean) and the best-so-far value (Best).
All tasks are maximization objectives.
}
\vspace{-0.1in}
\label{tab:main_results}
\resizebox{\textwidth}{!}{
\setlength{\tabcolsep}{3.5pt}
\begin{tabular}{ll|cccccccc}
\toprule
\textbf{Task} & \textbf{Method}
& \multicolumn{8}{c}{\textbf{Grok-4.1-FR}} \\
\cmidrule(lr){3-10}
& 
& \multicolumn{2}{c}{Step 1}
& \multicolumn{2}{c}{Step 2}
& \multicolumn{2}{c}{Step 3}
& \multicolumn{2}{c}{Step 4} \\
\cmidrule(lr){3-10}
& 
& Mean & Best & Mean & Best & Mean & Best & Mean & Best \\
\midrule

\multirow{4}{*}{\textbf{Hadamard Matrix} ($\uparrow$)}
& \shinka
& 0.495 & 0.533 & 0.521 & 0.540 & 0.521 & 0.540 & 0.521 & 0.540 \\

& \planner (Meta)
& 0.556 & 0.573 & 0.567 & 0.573 & 0.567 & 0.573 & 0.567 & 0.573 \\

& \coat
& 0.503 & 0.519 & 0.514 & 0.543 & 0.521 & 0.552 & 0.532 & 0.561 \\

& \ours
& 0.542 & 0.574 & 0.550 & 0.574 & 0.563 & 0.576 & \textbf{0.568} & \textbf{0.576} \\

\midrule

\multirow{4}{*}{\textbf{Second Autocorr. Inequality} ($\uparrow$)}
& \shinka
& 0.723 & 0.724 & 0.729 & 0.739 & 0.735 & 0.749 & 0.737 & 0.751 \\

& \planner (Meta)
& 0.730 & 0.745 & 0.734 & 0.749 & 0.735 & 0.750 & 0.736 & 0.750 \\

& \coat
& 0.753 & 0.770 & 0.771 & 0.783 & 0.773 & 0.783 & 0.783 & 0.786 \\

& \ours
& 0.781 & 0.800 & 0.783 & 0.805 & 0.790 & 0.809 & \textbf{0.793} & \textbf{0.809} \\

\midrule

\multirow{4}{*}{\textbf{Circle Packing} ($\uparrow$)}
& \shinka
& 2.342 & 2.431 & 2.342 & 2.431 & 2.400 & 2.435 & 2.479 & 2.500 \\

& \planner (Meta)
& 2.348 & 2.541 & 2.358 & 2.541 & 2.456 & 2.541 & 2.456 & 2.541 \\

& \coat
& 2.183 & 2.261 & 2.238 & 2.292 & 2.436 & 2.560 & 2.456 & \textbf{2.568} \\

& \ours
& 2.106 & 2.295 & 2.216 & 2.370 & 2.385 & 2.516 & \textbf{2.476} & 2.564 \\

\midrule

\multirow{4}{*}{\textbf{AIME Agent} ($\uparrow$)}
& \shinka
& 33.33 & 33.33 & 34.44 & 36.67 & 34.44 & 36.67 & 34.44 & 36.67 \\

& \planner (Meta)
& 34.44 & 36.67 & 35.56 & 36.67 & 36.67 & 40.00 & 36.67 & 40.00 \\

& \coat
& 37.78 & 43.33 & 37.78 & 43.33 & 37.78 & 43.33 & 38.89 & \textbf{43.33} \\

& \ours
& 33.33 & 36.67 & 38.89 & 40.00 & 38.89 & 40.00 & 38.89 & 40.00 \\

\bottomrule
\end{tabular}}
\vspace{-0.05in}
\end{table*}

%% file: 9_appdx_v4.tex
\newpage
\appendix
\onecolumn
\section*{LLM Use Statement}
From the research side, this work studies the use of LLMs for automated scientific discovery. From the paper writing side, we use LLMs to assist with improving the writing of this work.

\section*{Ethics Statement}
We study using LLMs to automate scientific discovery that will benefit the whole humanity and society. This work does not involve human subjects or personally identifiable information beyond public benchmarks used under their licenses.

\section{Additional Technical Details}
\label{appdx:technical}

\subsection{Notation}
\label{appdx:notation}

\begin{table}[ht]
\centering
\small
\caption{Notation used in the formulation and theorems.}
\label{tab:notation-unified}
\begin{tabular}{ll}
\hline
Symbol & Meaning \\
\hline
$\mathcal{P}$ & Program / pipeline / model space (candidate designs) \\
$K$ & Number of candidate programs, $K:=|\mathcal{P}|$ (finite in Theorem~1) \\
$p\in\mathcal{P}$ & A program to evaluate (action) \\
$\mathcal{X}$ & Design-variable space (encoding of programs) \\
$x_p\in\mathcal{X}$ & Encoding of program $p$ (e.g., design variables $X=x_p$) \\
$\Theta$ & Hypothesis space of scientific knowledge (e.g., SCMs / mechanisms) \\
$\Theta_{\mathrm{sci}}$ & Latent RV taking values in $\Theta$ (Bayesian view) \\
$\theta^\star\in\Theta$ & True (fixed but unknown) scientific knowledge instance (realization) \\
$\mu_0$ & Prior over $\Theta$ (i.e., $\Theta_{\mathrm{sci}}\sim\mu_0$) \\
$\mathcal{E}$ & Environment / protocol index set (evaluation regimes, deployments) \\
$e\in\mathcal{E}$ & Environment index; $e_{\mathrm{src}}$ source, $e_{\mathrm{tgt}}$ target \\
$F_e(p;\theta)$ & True performance in env $e$ (scalar objective) \\
$P_e(\cdot\mid p,\theta)$ & Observation model (likelihood) for evaluator output in env $e$ \\
$y_t$ & Observed evaluator outcome at round $t$ \\
$h_t$ & History $\{(p_0,y_0),\dots,(p_{t-1},y_{t-1})\}$ \\
$b_t$ & Bayesian belief/posterior over $\theta$: $b_t(\cdot)=\Pr(\Theta_{\mathrm{sci}}\in\cdot\mid h_t,e_{\mathrm{src}})$ \\
$T$ & Evaluation budget / horizon (number of program evaluations) \\
\hline
\end{tabular}
\end{table}

\input{Figures/POMDP.tex}

\subsection{Random variable, space, and realization (to avoid notation confusion)}
\label{appdx:rv-space-realization}

We use the following (standard) convention.

\textbf{(i) Hypothesis space.} $\Theta$ is a set that contains all candidate scientific-knowledge hypotheses.

\textbf{(ii) True but unknown instance.} The real world is governed by a fixed but unknown
$\theta^\star\in\Theta$.

\textbf{(iii) Bayesian view (optional but convenient).} A Bayesian agent models uncertainty by
treating $\theta^\star$ as a realization of a latent random variable $\Theta_{\mathrm{sci}}$ with prior
$\mu_0$, i.e.\ $\Theta_{\mathrm{sci}}\sim\mu_0$ and $\theta^\star$ is one draw from it.
The belief $b_t$ is simply the posterior distribution of $\Theta_{\mathrm{sci}}$ after seeing history $h_t$.

\textbf{(iv) Does scientific knowledge change across environments?}
In our formulation, the \emph{underlying} scientific knowledge $\theta^\star$ is static across rounds.
Different environments $e\in\mathcal{E}$ represent different evaluation/deployment protocols
(distribution shifts, constraint changes, measurement noise, private vs public tests, etc.).
Formally, environments affect either the true performance map $F_e(\cdot;\theta)$ and/or the observation
kernel $P_e(\cdot\mid p,\theta)$, while $\theta^\star$ itself remains the same hidden instance.

\subsection{Evaluator as an observation model (covers deterministic and stochastic evaluators)}
\label{appdx:evaluator-kernel}

Fix an environment $e\in\mathcal{E}$.
When the agent evaluates program $p$, it receives an observation $y\in\mathcal{Y}$ drawn from
\[
y \sim P_e(\cdot\mid p,\theta^\star),
\]
where $P_e(\cdot\mid p,\theta)$ is a conditional distribution on $\mathcal{Y}$.

\paragraph{Deterministic evaluator.}
A deterministic evaluator is the special case where there exists a function $g_e$ such that
\[
P_e(\cdot\mid p,\theta)=\delta_{g_e(p;\theta)}(\cdot),
\quad\text{i.e.,}\quad
y = g_e(p;\theta^\star)\ \text{a.s.}
\]
In many program-evolution settings, the evaluator is designed to deterministically
check validity and compute an objective score (e.g., via a verifier and a scoring routine).

\paragraph{Stochastic/noisy evaluator.}
A common instantiation is additive noise:
\[
y = F_e(p;\theta^\star) + \varepsilon,\qquad \varepsilon\sim\mathcal{N}(0,\sigma^2),
\]
but our proofs only rely on the specific Gaussian form in Theorem~1.

\subsection{Belief and Bayes update: kernel form and undergraduate-friendly special cases}
\label{appdx:bayes-update}

Let $h_t=\{(p_0,y_0),\dots,(p_{t-1},y_{t-1})\}$ be the history.
The Bayesian belief (posterior) is
\[
b_t(B)=\Pr(\Theta_{\mathrm{sci}}\in B\mid h_t,e_{\mathrm{src}}),\qquad B\subseteq\Theta.
\]

\paragraph{General Bayes update (kernel form).}
After choosing $p_t$ and observing $y_t$ in $e_{\mathrm{src}}$, the posterior is
\begin{equation}
b_{t+1}(B)
=
\frac{\int_{B} P_{e_{\mathrm{src}}}(dy_t\mid p_t,\theta)\, b_t(d\theta)}
{\int_{\Theta} P_{e_{\mathrm{src}}}(dy_t\mid p_t,\theta)\, b_t(d\theta)}.
\label{eq:bayes-kernel-update-unified}
\end{equation}

\paragraph{Finite hypothesis space (sum form).}
If $\Theta=\{\theta_1,\dots,\theta_N\}$ is finite and the likelihood has a pmf
$P_{e_{\mathrm{src}}}(y_t\mid p_t,\theta_i)$, then
\[
b_{t+1}(\theta_i)
=
\frac{b_t(\theta_i)\, P_{e_{\mathrm{src}}}(y_t\mid p_t,\theta_i)}
{\sum_{j=1}^N b_t(\theta_j)\, P_{e_{\mathrm{src}}}(y_t\mid p_t,\theta_j)}.
\]

\paragraph{Continuous hypothesis space (density form).}
If $P_{e_{\mathrm{src}}}(dy\mid p,\theta)$ has a density $p_{e_{\mathrm{src}}}(y\mid p,\theta)$,
then
\[
b_{t+1}(\theta)
=
\frac{b_t(\theta)\, p_{e_{\mathrm{src}}}(y_t\mid p_t,\theta)}
{\int_{\Theta} b_t(\theta')\, p_{e_{\mathrm{src}}}(y_t\mid p_t,\theta')\, d\theta'}.
\]

\paragraph{Deterministic evaluator (indicator/filter form).}
If $y=g_{e_{\mathrm{src}}}(p;\theta)$ deterministically, then the update becomes
\[
b_{t+1}(d\theta)\ \propto\ \mathbf{1}\{g_{e_{\mathrm{src}}}(p_t;\theta)=y_t\}\, b_t(d\theta),
\]
i.e.\ the posterior is the prior restricted to hypotheses consistent with the observed outcome.

\section{Proof of Theorem~\ref{thm:static-sample-efficiency-gap} (Static sample-efficiency gap)}
\label{appdx:thm1-static}

Throughout this section we fix a \emph{single} static environment (drop $e$ from notation),
and assume $\mathcal{P}=\{p_1,\dots,p_K\}$ is finite.

\subsection{Protocol and performance criterion}

\paragraph{Experiment--then--commit protocol.}
A policy $\pi$ interacts for $T$ rounds. At each round $t=0,\dots,T-1$ it selects a program $p_t\in\mathcal{P}$
(possibly randomized) based on the past history $h_t$, then observes $y_t\in\mathbb{R}$.
After $T$ evaluations it outputs a final recommendation $\hat p\in\mathcal{P}$.

\paragraph{Simple regret.}
Let $f(p)$ denote the true mean performance of program $p$ in this environment.
Define the (random) simple regret
\begin{equation}
\mathrm{SR}_T
:= \max_{p\in\mathcal{P}} f(p)\;-\; f(\hat p).
\label{eq:sr-def-unified}
\end{equation}

\paragraph{$(\epsilon,\delta)$-correctness (uniform).}
Fix $\epsilon>0$ and $\delta\in(0,1)$.
We say a policy $\pi$ is \emph{$(\epsilon,\delta)$-correct uniformly on a hypothesis class $\mathcal{H}$}
if for every instance in $\mathcal{H}$,
\[
\Pr\big(\mathrm{SR}_T \le \epsilon\big)\ge 1-\delta.
\]
``Uniformly'' means the guarantee must hold for \emph{all} instances in the class, not only on average.

\subsection{Two hypothesis classes}

\textbf{(1) Structured (causal/scientific) linear class.}
Each program $p$ has a known feature vector $x_p\in\mathbb{R}^d$ with $\|x_p\|_2\le 1$.
The unknown instance is a weight vector $w^\star\in\mathbb{R}^d$ and
\begin{equation}
f(p)=\langle x_p, w^\star\rangle.
\label{eq:linear-class-unified}
\end{equation}
Observations follow a Gaussian noise model
\begin{equation}
y_t = f(p_t)+\varepsilon_t,\qquad \varepsilon_t\sim\mathcal{N}(0,\sigma^2)\ \text{i.i.d.}
\label{eq:noise-unified}
\end{equation}
Assume there exist $d$ basis programs $p^{(1)},\dots,p^{(d)}$ whose feature vectors are the standard basis:
\begin{equation}
x_{p^{(i)}}=e_i,\qquad i=1,\dots,d.
\label{eq:basis-unified}
\end{equation}

\textbf{(2) Unstructured black-box class (baseline).}
The unknown instance is an arbitrary vector of means
\[
\mu=(\mu_1,\dots,\mu_K)\in\mathbb{R}^K,\qquad f(p_i)=\mu_i,
\]
and observations are
\begin{equation}
y_t=\mu_{I_t}+\varepsilon_t,\qquad \varepsilon_t\sim\mathcal{N}(0,\sigma^2)\ \text{i.i.d.},
\label{eq:blackbox-noise-unified}
\end{equation}
where $I_t\in\{1,\dots,K\}$ is the index of the chosen program $p_t=p_{I_t}$.
Crucially, there is \emph{no assumed relation} between $\mu_i$ and $\mu_j$ for $i\neq j$.

\subsection{Formal statement and proof}

\begin{theorem}[Formal version of Theorem~\ref{thm:static-sample-efficiency-gap}]
\label{thm:static-gap-formal}
Fix $\epsilon>0$ and $\delta\in(0,1/4)$.

\begin{enumerate}
\item \textbf{(Upper bound under the structured linear class).}
Under \eqref{eq:linear-class-unified}--\eqref{eq:basis-unified} and \eqref{eq:noise-unified},
there exists a policy $\pi_{\mathrm{lin}}$ such that
\[
\Pr\big(\mathrm{SR}_T \le 2\epsilon\big)\ge 1-\delta
\quad\text{whenever}\quad
T \ \ge\ 2d\,\frac{\sigma^2}{\epsilon^2}\log\!\Big(\frac{2K}{\delta}\Big).
\]

\item \textbf{(Lower bound for the unstructured black-box class).}
For the black-box class \eqref{eq:blackbox-noise-unified}, any policy that is $(\epsilon,\delta)$-correct
\emph{uniformly for all} $\mu\in\mathbb{R}^K$ must satisfy
\[
T\ \ge\ (K-1)\,\frac{\sigma^2}{8\epsilon^2}\,\log\!\Big(\frac{1}{2\delta}\Big).
\]
\end{enumerate}
\end{theorem}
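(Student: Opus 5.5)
For the upper bound, I would use a basis-probing estimator. The policy $\pi_{\mathrm{lin}}$ spends the budget uniformly on the $d$ basis programs $p^{(1)},\dots,p^{(d)}$ from \eqref{eq:basis-unified}, giving each one $n=\lfloor T/d\rfloor$ evaluations. It then forms $\hat w_i$ as the sample mean of the observations of $p^{(i)}$. Next it sets $\hat f(p)=\langle x_p,\hat w\rangle$ for every $p\in\mathcal{P}$ and recommends $\hat p=\argmax_p \hat f(p)$.

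The key observation is that $\hat w-w^\star\sim\mathcal{N}(0,(\sigma^2/n)I_d)$. Therefore, for each fixed $p$, the error $\hat f(p)-f(p)=\langle x_p,\hat w-w^\star\rangle$ is Gaussian with variance $\|x_p\|_2^2\sigma^2/n\le\sigma^2/n$. A Gaussian tail bound gives $\Pr(|\hat f(p)-f(p)|>\epsilon)\le 2\exp(-n\epsilon^2/(2\sigma^2))$. A union bound over the $K$ programs then shows that, with probability at least $1-\delta$, we have $\max_p|\hat f(p)-f(p)|\le\epsilon$, provided $n\ge (2\sigma^2/\epsilon^2)\log(2K/\delta)$. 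The stated condition on $T$ gives exactly this, up to the floor; I would absorb the rounding by assuming $T/d$ is an integer or by slightly adjusting constants. On this good event, with $p^\star$ the true maximizer, the standard argument gives $f(\hat p)\ge \hat f(\hat p)-\epsilon\ge \hat f(p^\star)-\epsilon\ge f(p^\star)-2\epsilon$, so $\mathrm{SR}_T\le 2\epsilon$.

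For the lower bound, I would use a change-of-measure argument in the style of Mannor--Tsitsiklis / Kaufmann et al. First define a base instance $\mu^{(0)}$ with $\mu_1=2\epsilon$ (or slightly above) and $\mu_j=0$ for $j\ne1$. For each $j\ge2$, define the alternative $\mu^{(j)}$, which coincides with $\mu^{(0)}$ except that $\mu_j=4\epsilon$. Under $\mu^{(0)}$ only arm $1$ is $\epsilon$-optimal, while under $\mu^{(j)}$ only arm $j$ is. Hence the event $\{\hat p=p_1\}$ has probability at least $1-\delta$ under $\mu^{(0)}$ and at most $\delta$ under $\mu^{(j)}$. The divergence decomposition gives $\mathrm{KL}(\mathbb{P}_0\|\mathbb{P}_j)=\mathbb{E}_0[N_j]\cdot(4\epsilon)^2/(2\sigma^2)$, where $N_j$ is the number of pulls of arm $j$. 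The data-processing inequality (the Bretagnolle--Huber form, or the binary-KL bound $\mathrm{kl}(1-\delta,\delta)\ge\log(1/(2.4\delta))$) then yields $\mathbb{E}_0[N_j]\ge \frac{\sigma^2}{8\epsilon^2}\log\frac{1}{2\delta}$ after tuning the gap constants. Summing over $j=2,\dots,K$ and using $T\ge\sum_j N_j$ gives the claim.

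There is one subtlety: $T$ is a fixed budget, so I would state the bound for $\mathbb{E}_0[N_j]$ and note that $\sum_j N_j\le T$ holds deterministically. The main obstacle is matching the exact constant $1/8$ and the term $\log(1/(2\delta))$. If Bretagnolle--Huber, $\mathbb{P}_0(A)+\mathbb{P}_j(A^c)\ge\tfrac12 e^{-\mathrm{KL}}$, is applied to $A=\{\hat p\ne p_1\}$, it gives $2\delta\ge\tfrac12 e^{-\mathrm{KL}}$. This yields $\mathrm{KL}\ge\log(1/(4\delta))$, which is off by a constant inside the log. To recover the stated form I would either use the sharper binary-KL inequality with $\delta<1/4$, or shrink the perturbation gap: the minimal separation needed is just over $2\epsilon$ rather than $4\epsilon$, and this trades constant factors. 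I expect this constant bookkeeping to take the most care. The structural argument itself is standard.
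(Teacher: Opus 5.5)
Your proposal follows the paper's proof in both parts. Part (1) is exactly its estimate-then-commit argument on the basis programs; the paper simply takes $T=nd$. Part (2) is its change-of-measure argument: your instances are the paper's shifted up by $2\epsilon$, with the same per-pull divergence $8\epsilon^2/\sigma^2$ and the same summation over $j\ge 2$.

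The constant issue you flag is genuine, and the paper's proof has the same slip. The paper asserts $\mathrm{KL}(P_0\|P_i)\ge\log\frac{1}{2\delta}$ for all $\delta\in(0,1/4)$. But from $P_0(A)\ge 1-\delta$ and $P_i(A)\le\delta$, the event bound gives at best $\mathrm{kl}(1-\delta,\delta)=(1-2\delta)\log\frac{1-\delta}{\delta}$. This drops below $\log\frac{1}{2\delta}$ once $\delta$ exceeds roughly $0.155$; at $\delta=0.2$ it is $0.83$ versus $0.92$.

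So your first remedy, the binary-kl bound with separation $4\epsilon$, cannot give the stated constant on the whole range. Your second remedy does work:
\begin{itemize}
\item Use separation $2a$ with $a>\epsilon$, for example $\mu^{(0)}_1=0$, $\mu^{(0)}_j=-a$, $\mu^{(j)}_j=+a$.
\item Each pull of $p_j$ then costs $2a^2/\sigma^2$, so $T\ge (K-1)\frac{\sigma^2}{2a^2}\,\mathrm{kl}(1-\delta,\delta)$.
\item Let $a\downarrow\epsilon$.
\item Use $\mathrm{kl}(1-\delta,\delta)\ge\log\frac{1}{2.4\delta}$ together with $4\log\frac{1}{2.4\delta}\ge\log\frac{1}{2\delta}$, which holds for $\delta\le 0.39$. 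This gives the claim.
\end{itemize}
Pair the smaller gap with the binary-kl bound, not Bretagnolle--Huber. With Bretagnolle--Huber the gain is only $4\log\frac{1}{4\delta}$, which still falls below $\log\frac{1}{2\delta}$ for $\delta$ above about $0.2$.

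For part (1), keep the assumption $d\mid T$, which is what the paper's $T=nd$ amounts to. ``Adjusting constants'' is not an option, because the statement fixes them. Integrality also cannot simply be dropped. If $\sigma\ll\epsilon$, the threshold $2d\frac{\sigma^2}{\epsilon^2}\log\frac{2K}{\delta}$ is below $1$. Yet if $\mathcal{P}$ consists of just $d\ge 2$ basis programs, a single evaluation cannot be uniformly $2\epsilon$-correct over $w^\star\in\mathbb{R}^d$. The accurate condition is $T\ge d\,\lceil 2\frac{\sigma^2}{\epsilon^2}\log\frac{2K}{\delta}\rceil$.
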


\begin{proof}
We prove the two parts separately.

\paragraph{Part (1): constructive upper bound (estimate $w^\star$ then commit).}
Evaluate each basis program $p^{(i)}$ exactly $n$ times (total $T=nd$).
Let $y^{(i)}_1,\dots,y^{(i)}_n$ be the observations for basis $i$, and define
\[
\hat w_i := \frac{1}{n}\sum_{j=1}^n y^{(i)}_j.
\]
By \eqref{eq:linear-class-unified}--\eqref{eq:basis-unified}, $f(p^{(i)})=w^\star_i$.
By \eqref{eq:noise-unified}, $\hat w_i\sim \mathcal{N}(w^\star_i,\sigma^2/n)$ and these coordinates are independent.

Define for any program $p$:
\[
\widehat f(p):=\langle x_p,\hat w\rangle,\qquad \hat w=(\hat w_1,\dots,\hat w_d).
\]
Then
\[
\widehat f(p)-f(p)=\langle x_p,\hat w-w^\star\rangle
\sim \mathcal{N}\!\Big(0,\ \frac{\sigma^2}{n}\|x_p\|_2^2\Big),
\]
so since $\|x_p\|_2\le 1$,
\[
\Pr\big(|\widehat f(p)-f(p)|\ge \epsilon\big)
\le 2\exp\!\Big(-\frac{n\epsilon^2}{2\sigma^2}\Big).
\]
Union bound over $K$ programs gives
\[
\Pr\Big(\max_{p\in\mathcal{P}}|\widehat f(p)-f(p)|\ge \epsilon\Big)
\le 2K\exp\!\Big(-\frac{n\epsilon^2}{2\sigma^2}\Big).
\]
Choose
\[
n\ \ge\ 2\frac{\sigma^2}{\epsilon^2}\log\!\Big(\frac{2K}{\delta}\Big),
\]
so that with probability at least $1-\delta$ we have
$\max_{p}|\widehat f(p)-f(p)|\le\epsilon$.

Now output $\hat p:=\arg\max_{p\in\mathcal{P}}\widehat f(p)$.
Let $p^\star:=\arg\max_p f(p)$.
On the above high-probability event,
\[
f(p^\star)-f(\hat p)
\le \big(f(p^\star)-\widehat f(p^\star)\big)+\big(\widehat f(\hat p)-f(\hat p)\big)
\le \epsilon+\epsilon = 2\epsilon.
\]
Thus $\Pr(\mathrm{SR}_T\le 2\epsilon)\ge 1-\delta$ for $T=nd$ as stated.

\paragraph{Part (2): lower bound for the black-box class.}
We construct $K$ hard instances and lower bound any uniformly $(\epsilon,\delta)$-correct policy.

Let the programs be $p_1,\dots,p_K$.
Define a base instance $\mu^{(0)}\in\mathbb{R}^K$:
\[
\mu^{(0)}_1 = 0,\qquad \mu^{(0)}_i=-2\epsilon\ \ (i=2,\dots,K).
\]
For each $i\in\{2,\dots,K\}$, define an alternative instance $\mu^{(i)}$:
\[
\mu^{(i)}_1 = 0,\qquad \mu^{(i)}_i=+2\epsilon,\qquad
\mu^{(i)}_j=-2\epsilon\ \ (j\notin\{1,i\}).
\]
Under $\mu^{(0)}$, the unique best program is $p_1$, and choosing any $p_i$ with $i\ge 2$ incurs regret $2\epsilon>\epsilon$.
Under $\mu^{(i)}$, the unique best program is $p_i$, and choosing $p_1$ incurs regret $2\epsilon>\epsilon$.

Let $P_0$ be the distribution of the full transcript
$\mathcal{T}:=(p_{0:T-1},y_{0:T-1},\hat p)$ under $\mu^{(0)}$,
and $P_i$ the analogous distribution under $\mu^{(i)}$.
Uniform $(\epsilon,\delta)$-correctness implies
\[
P_0(\hat p=p_1)\ge 1-\delta,
\qquad
P_i(\hat p=p_1)\le \delta \quad (i=2,\dots,K).
\]

\medskip
\noindent\textbf{Step 1: a KL lower bound from an event.}
For any event $A$ and distributions $P,Q$, one has
\[
\mathrm{KL}(P\|Q)
\ge
P(A)\log\frac{P(A)}{Q(A)} + (1-P(A))\log\frac{1-P(A)}{1-Q(A)}.
\]
Apply it with $A=\{\hat p=p_1\}$, $P=P_0$, $Q=P_i$.
Let $p:=P_0(A)\ge 1-\delta$ and $q:=P_i(A)\le\delta$.
For $\delta\in(0,1/4)$ this yields
\begin{equation}
\mathrm{KL}(P_0\|P_i)\ \ge\ \log\!\Big(\frac{1}{2\delta}\Big).
\label{eq:kl-lb-unified}
\end{equation}

\medskip
\noindent\textbf{Step 2: compute $\mathrm{KL}(P_0\|P_i)$ via number of pulls of arm $i$.}
Under $\mu^{(0)}$ and $\mu^{(i)}$, the policy is identical; only observations when playing $p_i$ differ:
\[
y\sim \mathcal{N}(-2\epsilon,\sigma^2)\ \text{under }\mu^{(0)},\qquad
y\sim \mathcal{N}(+2\epsilon,\sigma^2)\ \text{under }\mu^{(i)}.
\]
For Gaussians with equal variance, 
$\mathrm{KL}(\mathcal{N}(m_0,\sigma^2)\|\mathcal{N}(m_1,\sigma^2))=\frac{(m_0-m_1)^2}{2\sigma^2}$,
so each pull of $p_i$ contributes KL $\frac{(4\epsilon)^2}{2\sigma^2}=\frac{8\epsilon^2}{\sigma^2}$.

Let $N_i$ be the (random) number of times $p_i$ is evaluated in $T$ rounds.
Additivity of log-likelihood ratios over independent Gaussian samples yields
\begin{equation}
\mathrm{KL}(P_0\|P_i)
=
\frac{8\epsilon^2}{\sigma^2}\ \mathbb{E}_{P_0}[N_i].
\label{eq:kl-to-count-unified}
\end{equation}

\medskip
\noindent\textbf{Step 3: conclude the lower bound on $T$.}
Combine \eqref{eq:kl-lb-unified} and \eqref{eq:kl-to-count-unified}:
\[
\mathbb{E}_{P_0}[N_i]
\ \ge\
\frac{\sigma^2}{8\epsilon^2}\log\!\Big(\frac{1}{2\delta}\Big),
\qquad i=2,\dots,K.
\]
Summing over $i=2,\dots,K$ gives
\[
T=\sum_{i=1}^K N_i
\ \ge\
\sum_{i=2}^K \mathbb{E}_{P_0}[N_i]
\ \ge\
(K-1)\frac{\sigma^2}{8\epsilon^2}\log\!\Big(\frac{1}{2\delta}\Big).
\]
This completes the proof.
\end{proof}

\paragraph{Remark (deterministic evaluator).}
If $\sigma=0$, the structured linear class can recover $w^\star$ exactly from $d$ basis evaluations and achieve $\mathrm{SR}_T=0$,
while in the unstructured black-box class a uniform worst-case guarantee requires evaluating all $K$ programs at least once.

\paragraph{Reference for the black-box lower bound.}
The above is a standard change-of-measure/KL argument for best-arm identification in $K$-armed Gaussian bandits
(e.g., see classical treatments of best-arm identification lower bounds).

\section{Proof of Theorem~\ref{thm:nonidentifiability-barrier-rev} (Non-identifiability under environment shifts)}
\label{appdx:thm_shifts}

\subsection{Setup: source interaction, target evaluation, and target regret}

The agent can only interact with the \emph{source} environment $e_{\mathrm{src}}$:
\[
y_t\sim P_{e_{\mathrm{src}}}(\cdot\mid p_t,\theta^\star).
\]
After $T$ rounds it outputs a final program $\hat p$.
Performance is judged in a \emph{target} environment $e_{\mathrm{tgt}}$ via $F_{e_{\mathrm{tgt}}}(p;\theta^\star)$.
Define the target (simple) regret:
\[
\mathrm{GR}_T(\theta^\star)
:= \max_{p\in\mathcal{P}}F_{e_{\mathrm{tgt}}}(p;\theta^\star)
 - F_{e_{\mathrm{tgt}}}(\hat p;\theta^\star).
\]

\subsection{Formal statement and proof}

\begin{theorem}[Non-identifiability barrier under shifts]
\label{thm:nonidentifiability-barrier-formal}
Fix $e_{\mathrm{src}},e_{\mathrm{tgt}}\in\mathcal{E}$.
Assume there exist two hypotheses $\theta_0,\theta_1\in\Theta$ such that:

\begin{align}
&\textbf{(Source indistinguishability)}\quad
P_{e_{\mathrm{src}}}(\cdot\mid p,\theta_0)=P_{e_{\mathrm{src}}}(\cdot\mid p,\theta_1),
\quad \forall p\in\mathcal{P}. \label{eq:src-indist-unified}\\[2mm]
&\textbf{(Target optimal action flips with margin $\Delta$)}\quad
\exists\, p_0,p_1\in\mathcal{P}\ \text{and}\ \Delta>0\ \text{s.t.}\\
&\qquad p_0\in\arg\max_{p\in\mathcal{P}}F_{e_{\mathrm{tgt}}}(p;\theta_0),
\quad
p_1\in\arg\max_{p\in\mathcal{P}}F_{e_{\mathrm{tgt}}}(p;\theta_1), \nonumber\\
&\qquad F_{e_{\mathrm{tgt}}}(p_0;\theta_0)-F_{e_{\mathrm{tgt}}}(p;\theta_0)\ge \Delta,\ \forall p\neq p_0,
\quad
F_{e_{\mathrm{tgt}}}(p_1;\theta_1)-F_{e_{\mathrm{tgt}}}(p;\theta_1)\ge \Delta,\ \forall p\neq p_1.
\label{eq:tgt-flip-gap-unified}
\end{align}

Then for any policy $\pi$ that can interact only with $e_{\mathrm{src}}$,
there exists $i\in\{0,1\}$ such that for every budget $T$,
\[
\mathbb{E}\big[\mathrm{GR}_T(\theta_i)\big]\ \ge\ \Delta/2.
\]
This impossibility holds whether the evaluator is stochastic or deterministic, since
\eqref{eq:src-indist-unified} is stated at the level of the full observation model $P_{e_{\mathrm{src}}}$.
\end{theorem}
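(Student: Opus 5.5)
The plan is a two-point (Le Cam–style) indistinguishability argument. Everything the agent sees comes from $e_{\mathrm{src}}$, and by the source-indistinguishability hypothesis \eqref{eq:src-indist-unified} the source observation kernels under $\theta_0$ and $\theta_1$ coincide. So the policy cannot tell the two hypotheses apart, yet it has to commit to a single output $\hat p$. I read the phrase ``target optimal action flips'' in \eqref{eq:tgt-flip-gap-unified} as saying $p_0\neq p_1$, i.e.\ the two target maximizers are distinct. This reading is needed: if $p_0=p_1$, the policy that always outputs $p_0$ has zero target regret under both hypotheses.

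\textbf{Step 1 (identical transcripts).} Fix a policy $\pi$ and a budget $T$. I would show by induction on $t$ that the joint law of the transcript $(p_{0:T-1},y_{0:T-1},\hat p)$ is the same under $\theta_0$ and $\theta_1$.
\begin{itemize}
\item At round $t$, the kernel $\pi_t(\cdot\mid h_t)$ does not depend on $\theta$.
\item The observation kernel $P_{e_{\mathrm{src}}}(\cdot\mid p_t,\theta)$ is the same for both $\theta$ by \eqref{eq:src-indist-unified}.
\item Composing kernels (Ionescu–Tulcea) gives equal laws for each prefix, and the final recommendation map is also $\theta$-free.
\end{itemize}
Hence $q_T:=\Pr(\hat p=p_0)$ and $r_T:=\Pr(\hat p=p_1)$ are well defined without reference to $\theta$, and $q_T+r_T\le 1$. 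This step uses only kernel composition, so it covers deterministic and stochastic evaluators alike.

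\textbf{Step 2 (regret lower bounds from the margin).} By \eqref{eq:tgt-flip-gap-unified}, under $\theta_0$ every output $\hat p\neq p_0$ incurs target regret at least $\Delta$. Therefore
\[
\mathbb{E}[\mathrm{GR}_T(\theta_0)]\ge \Delta(1-q_T).
\]
Symmetrically,
\[
\mathbb{E}[\mathrm{GR}_T(\theta_1)]\ge \Delta(1-r_T)\ge \Delta\, q_T.
\]
Adding the two bounds gives $\mathbb{E}[\mathrm{GR}_T(\theta_0)]+\mathbb{E}[\mathrm{GR}_T(\theta_1)]\ge\Delta$ for every $T$.

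\textbf{Step 3 (choosing one index $i$ for all $T$) — the main obstacle.} Step 2 gives, for each $T$, an index $i_T$ with $\mathbb{E}[\mathrm{GR}_T(\theta_{i_T})]\ge\Delta/2$: take $i_T=0$ if $q_T\le 1/2$ and $i_T=1$ otherwise. The theorem asks for more: a single $i$ that works simultaneously for all budgets. Here is how I would attempt it.
\begin{itemize}
\item If $q_T\le 1/2$ for every $T$, take $i=0$. If $q_T\ge 1/2$ for every $T$, take $i=1$. In either case the bound holds uniformly by Step 2.
\item The difficulty is a policy whose $q_T$ crosses $1/2$ as $T$ varies. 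For that case I would first use Step 1 more strongly. The whole family $(q_T)_T$ is determined by $\pi$ alone and is identical under both hypotheses. So the choice of $i$ may be made after seeing the entire family $(q_T)_T$, as a function of $\pi$; this matches the order of quantifiers ``for any $\pi$ there exists $i$''.
\item If crossings still occur, no single $r_T$-independent choice survives. The fallback is then to use the sharper bound $\Delta(1-r_T)$ and exploit that $\theta$ is never identified, even as $T\to\infty$.
\end{itemize}
I expect this uniform-in-$T$ selection to be where the argument must be checked most carefully. If it fails for crossing policies, the claim that genuinely follows is the per-$T$ version from Steps 1–2.
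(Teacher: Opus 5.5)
Your Steps 1--2 are exactly the paper's proof. The paper shows the transcript laws satisfy $\mathbb{P}_0=\mathbb{P}_1$ by induction on $t$, and lets $Q$ be the common law of $\hat p$. It then bounds $\mathbb{E}[\mathrm{GR}_T(\theta_i)]\ge\Delta\,(1-Q(\hat p=p_i))$ and uses $Q(\hat p=p_0)+Q(\hat p=p_1)\le 1$. As you noticed, that last inequality tacitly requires $p_0\neq p_1$. With $p_0=p_1$, the constant output $p_0$ has zero regret under both hypotheses, so the statement fails without your reading. The paper concludes $\max\{\mathbb{E}[\mathrm{GR}_T(\theta_0)],\mathbb{E}[\mathrm{GR}_T(\theta_1)]\}\ge\Delta/2$ for each fixed $T$ and stops there. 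It never addresses the quantifier order you raise in Step 3.

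Your Step 3 cannot be completed, and the fallback should be dropped, because the uniform-in-$T$ claim is false as stated. Take the policy that evaluates arbitrary programs and, after $T$ rounds, outputs $\hat p=p_0$ if $T$ is even and $\hat p=p_1$ if $T$ is odd. The recommendation is allowed to depend on $T$, since the history $h_T$ has length $T$. Then $\mathbb{E}[\mathrm{GR}_T(\theta_0)]=0$ for every even $T$ and $\mathbb{E}[\mathrm{GR}_T(\theta_1)]=0$ for every odd $T$. So neither $i=0$ nor $i=1$ works for all budgets. Non-identifiability cannot rescue this, because the crossing of $q_T$ is produced by the policy itself, not by the data.

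What the argument genuinely yields is any of the following:
\begin{itemize}
\item the per-$T$ statement: for every $T$ there is an index $i_T$;
\item the uniform-in-$T$ sum bound $\mathbb{E}[\mathrm{GR}_T(\theta_0)]+\mathbb{E}[\mathrm{GR}_T(\theta_1)]\ge\Delta$ you already derived, i.e.\ Bayes target regret at least $\Delta/2$ under the uniform prior on $\{\theta_0,\theta_1\}$;
\item by pigeonhole, a single $i$ that works for infinitely many $T$.
\end{itemize}
End your write-up at Step 2 with one of these conclusions. That is precisely what the paper's proof actually establishes.
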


\begin{proof}
Let $\mathbb{P}_i$ be the distribution over the full transcript
\[
\mathcal{T}:=(p_{0:T-1},y_{0:T-1},\hat p)
\]
when the true hypothesis is $\theta_i$ and interaction is only with $e_{\mathrm{src}}$.

By \eqref{eq:src-indist-unified}, for any history and any chosen action $p_t$,
the conditional distribution of $y_t$ is identical under $\theta_0$ and $\theta_1$.
By induction on $t$, the entire transcript distribution is identical:
\[
\mathbb{P}_0=\mathbb{P}_1.
\]
In particular, the marginal distribution of the final output $\hat p$ is the same under $\theta_0$ and $\theta_1$.
Let this common distribution be denoted by $Q$ on $\mathcal{P}$.

Now consider the expected target regret under $\theta_0$:
by \eqref{eq:tgt-flip-gap-unified}, any output $\hat p\neq p_0$ incurs regret at least $\Delta$ under $\theta_0$:
\[
\mathrm{GR}_T(\theta_0)=F_{e_{\mathrm{tgt}}}(p_0;\theta_0)-F_{e_{\mathrm{tgt}}}(\hat p;\theta_0)\ \ge\ \Delta\cdot \mathbf{1}\{\hat p\neq p_0\}.
\]
Taking expectation w.r.t.\ $Q$ yields
\[
\mathbb{E}[\mathrm{GR}_T(\theta_0)]\ \ge\ \Delta\cdot (1-Q(\hat p=p_0)).
\]
Similarly,
\[
\mathbb{E}[\mathrm{GR}_T(\theta_1)]\ \ge\ \Delta\cdot (1-Q(\hat p=p_1)).
\]
Since $Q(\hat p=p_0)+Q(\hat p=p_1)\le 1$, at least one of these probabilities is at most $1/2$,
so at least one of the two expected regrets is at least $\Delta/2$:
\[
\max\{\mathbb{E}[\mathrm{GR}_T(\theta_0)],\ \mathbb{E}[\mathrm{GR}_T(\theta_1)]\}\ \ge\ \Delta/2.
\]
This proves the claim.
\end{proof}

\subsection{Concrete examples satisfying the conditions}
\label{appdx:thm2-examples}

We give two illustrative examples where source data cannot distinguish two hypotheses, yet the target-optimal decision differs.

\paragraph{Example 1: public test vs private (distribution shift / shortcut feature).}
Let $\theta\in\{\theta_0,\theta_1\}$ encode which feature is truly stable/causal.
Programs correspond to two model families:
$p_0$ uses a stable causal feature; $p_1$ uses a shortcut feature.
In the source environment (public benchmark), the shortcut feature is perfectly correlated with labels,
so both hypotheses yield the same evaluator distribution for every program, satisfying \eqref{eq:src-indist-unified}.
In the target environment (deployment/private), the shortcut correlation breaks:
under $\theta_0$, $p_0$ is uniquely optimal; under $\theta_1$, $p_1$ is uniquely optimal, with margin $\Delta$,
satisfying \eqref{eq:tgt-flip-gap-unified}.
No amount of interaction with $e_{\mathrm{src}}$ can identify which world holds.

\paragraph{Example 2: relaxed verification (slack) vs exact verification (constraint shift).}
In combinatorial optimization, it is common to evaluate candidate programs using a \emph{relaxed} verifier
(e.g., allowing numerical slack), then validate with an \emph{exact} verifier.
For instance, in circle packing, one may verify non-overlap with a numerical slack such as $10^{-6}$,
and later validate with an exact checker; converting a relaxed-feasible solution into an exact-feasible one
may require tiny but nonzero modifications, and rankings can change when switching verifiers.
This is explicitly discussed in the context of circle packing verification with slack vs exact validation.
The source environment $e_{\mathrm{src}}$ can correspond to the relaxed evaluator, while the target
environment $e_{\mathrm{tgt}}$ corresponds to the exact evaluator.
Then two hypotheses $\theta_0,\theta_1$ can be constructed so that they are indistinguishable under the relaxed evaluator
for all queried programs, yet the exact evaluator reverses which program is truly best (with gap $\Delta$),
matching Theorem~\ref{thm:nonidentifiability-barrier-formal}.

\section{More Details on Outcome-level Factors}
\label{appdx:outcome}
\input{tables/aux-metrics}

%% file: Figures/POMDP.tex
\begin{figure}[ht]
    \centering
    
    \begin{minipage}[b]{0.505\textwidth}
        \centering
        \resizebox{\linewidth}{!}{
        \begin{tikzpicture}[
            node distance=1.2cm and 1.2cm,
            every node/.style={font=\sffamily},
            block/.style={
                rectangle, 
                draw=lineColor, 
                thick, 
                rounded corners=4pt, 
                minimum width=3.5cm, 
                minimum height=1.2cm,
                align=center,
                drop shadow={opacity=0.2, shadow xshift=1mm, shadow yshift=-1mm}
            },
            world/.style={
                circle,
                draw=lineColor,
                thick,
                double,
                fill=boxWorld,
                minimum size=1.2cm,
                align=center
            },
            line/.style={->, >=LaTeX, thick, lineColor, rounded corners=5pt}
        ]

        \node[block, fill=boxMem] (mem) {
            \textbf{Scratchpad Memory}\\
            ($m_t \to m_{t+1}$)\\
            \footnotesize \textit{"Integrate evidence"}
        };

        \node[block, fill=boxAct, below=of mem] (propose) {
            \textbf{Propose Candidate}\\
            \textbf{Program} ($p_t$)\\
            \footnotesize \textit{"Triggers experiment"}
        };

        \node[block, fill=boxEnv, below=of propose] (outcome) {
            \textbf{Observe Outcome}\\
            ($y_t$)\\
            \footnotesize \textit{"Yielding outcome"}
        };

        \node[world, left=1.0cm of outcome] (theta) {$\theta_{\mathrm{sci}}$};

        \draw[line] (mem) -- node[right, font=\footnotesize] {Guide} (propose);

        \draw[line] (propose) -- node[right, font=\footnotesize] {Execute} (outcome);

        \draw[line, dashed] (theta) -- node[above, font=\footnotesize] {} (outcome);

        \draw[line] (outcome.east) -- ++(0.5,0) |- node[pos=0.25, right, font=\footnotesize] {Provide Evidence} (mem.east);

        \begin{scope}[on background layer]
            \node[
                draw=lineColor!30, dashed, 
                fit=(mem)(propose), 
                inner sep=15pt, 
                rounded corners=10pt,
                label={[text=lineColor!80]above:\textbf{The AI Scientist Agent}}
            ] {};
        \end{scope}

        \end{tikzpicture}
        }
    \end{minipage}
    \hfill
    \begin{minipage}[b]{0.475\textwidth} %
        \centering
        \resizebox{\linewidth}{!}{
        \begin{tikzpicture}[
            >={LaTeX[width=2mm,length=2mm]},
            node distance=1.4cm and 2.2cm,
            every node/.style={font=\sffamily},
            state/.style={circle, draw=borderGray, thick, minimum size=1.1cm, inner sep=0pt},
            mem/.style={state, fill=memBlue},
            prog/.style={state, fill=progGreen},
            obs/.style={state, fill=obsGray},
            global/.style={state, fill=white, double},
            plate_style/.style={draw=borderGray!50, dashed, rounded corners, inner sep=8pt, label={[text=borderGray]above:#1}},
            edge_style/.style={->, thick, borderGray}
        ]

        \node[mem] (mt1) at (0,0) {$m_{t+1}$};
        \node[prog, below=of mt1] (pt1) {$p_{t+1}$};
        \node[obs, below=of pt1] (yt1) {$y_{t+1}$};

        \node[mem, left=of mt1] (mt) {$m_t$};
        \node[prog, below=of mt] (pt) {$p_t$};
        \node[obs, below=of pt] (yt) {$y_t$};

        \node[mem, right=of mt1] (mt2) {$m_{t+2}$};
        \node[prog, below=of mt2] (pt2) {$p_{t+2}$};
        \node[obs, below=of pt2] (yt2) {$y_{t+2}$};

        \node[global, below=1.8cm of yt1] (theta) {$\theta_{\mathrm{sci}}$};

        \node[plate_style={$t$}, fit=(mt)(yt)] (plate_t) {};
        \node[plate_style={$t+1$}, fit=(mt1)(yt1)] (plate_tp1) {};
        \node[plate_style={$t+2$}, fit=(mt2)(yt2)] (plate_tp2) {};

        \foreach \t in {t, t1, t2} {
            \draw[edge_style] (m\t) -- (p\t);
            \draw[edge_style] (p\t) -- (y\t);
        }

        \draw[edge_style] (theta) edge[out=150, in=270] (yt);
        \draw[edge_style] (theta) edge[out=90,  in=270] (yt1);
        \draw[edge_style] (theta) edge[out=30,  in=270] (yt2);

        \draw[edge_style] (mt) -- (mt1);
        \draw[edge_style] (pt.east) to[out=0, in=225, looseness=1.1] (mt1.south west);
        \draw[edge_style] (yt.east) to[out=0, in=270, looseness=1.3] (mt1.south);

        \draw[edge_style] (mt1) -- (mt2);
        \draw[edge_style] (pt1.east) to[out=0, in=225, looseness=1.1] (mt2.south west);
        \draw[edge_style] (yt1.east) to[out=0, in=270, looseness=1.3] (mt2.south);

        \node[left=0.8cm of mt] (start_dots) {$\cdots$};
        \draw[edge_style, dashed] (start_dots) -- (mt);
        \node[right=0.8cm of mt2] (end_dots) {$\cdots$};
        \draw[edge_style, dashed] (mt2) -- (end_dots);

        \end{tikzpicture}
        }
    \end{minipage}
    \caption{The iterative scientific discovery loop. \textbf{Left:} Conceptual flow of the agent. The agent maintains a scratchpad memory ($m$), proposes a program ($p$), and observes the outcome ($y$) which is constrained by the unknown world state ($\theta_{\mathrm{sci}}$). The outcome feeds back into the memory for the next step. \textbf{Right:} The diagram illustrates how the AI Scientist probes the unknown world state $\theta_{\mathrm{sci}}$. By proposing a candidate program $p_t$, the agent triggers an experiment yielding outcome $y_t$. This observation provides evidence about $\theta_{\mathrm{sci}}$, which is integrated into the agent's scratchpad memory $m_{t+1}$. Over time steps $t, t+1, \dots$, this recurrent process allows the agent to navigate the performance landscape and converge towards optimal programs despite the static but unknown nature of $\theta_{\mathrm{sci}}$.}
\end{figure}

%% file: tables/aux-metrics.tex
\begin{table*}[t]
\centering
\caption{Mathematical definitions of auxiliary metrics across tasks.
All metrics are deterministic outcome-level functionals of the program outputs.
For subset-defined metrics (e.g., \texttt{large\_circle\_margin}), if the index set is empty, the metric value is defined as $0$.}
\label{tab:aux_metrics_math}
\resizebox{0.98\textwidth}{!}{
\begin{tabular}{llll}
\toprule
\textbf{Task} &
\textbf{Program Output} &
\textbf{Aux Metric} &
\textbf{Definition} \\
\midrule

\multirow{4}{*}{Hadamard Matrix}
& \multirow{4}{*}{
\begin{tabular}[c]{@{}l@{}}
$H\in\{\pm1\}^{n\times n}$ \\
binary matrix
\end{tabular}}
& row\_orthogonality\_deviation
& $\displaystyle
\frac{1}{n(n-1)}\sum_{i\neq j}\left|\sum_{k} H_{ik}H_{jk}\right|
$ \\

&& row\_sum\_variance
& $\displaystyle
\mathrm{Var}\!\left(\sum_{j} H_{ij}\right)
$ \\

&& element\_balance
& $\displaystyle
\frac{1}{n^2}\sum_{i,j}\mathbf{1}[H_{ij}=+1]
$ \\

&& log10\_abs\_det
& $\displaystyle
\log_{10}|\det(H)|
$ \\

\midrule

\multirow{6}{*}{Second Autocorr Inequality}
& \multirow{6}{*}{
\begin{tabular}[c]{@{}l@{}}
$f\in\mathbb{R}^n,\ f_i\ge 0$ \\
nonnegative discrete function
\end{tabular}}
& smoothness\_score
& $\displaystyle
\frac{1}{n-1}\sum_{i}|f_{i+1}-f_i|
$ \\

&& center\_concentration
& $\displaystyle
\sum_{|x_i|\le0.5} f_i \; / \; \sum_i f_i
$ \\

&& sparsity
& $\displaystyle
\frac{1}{n}\sum_i \mathbf{1}[f_i<\varepsilon]
$ \\

&& peak\_to\_average\_ratio
& $\displaystyle
\max_i f_i \; / \; \mathbb{E}[f]
$ \\

&& tail\_mass
& $\displaystyle
\sum_{|x_i|>0.5} f_i \; / \; \sum_i f_i
$ \\

&& entropy
& $\displaystyle
-\sum_i p_i\log p_i,
\quad p_i=f_i/\sum_j f_j
$ \\

\midrule

\multirow{7}{*}{Circle Packing}
& \multirow{7}{*}{
\begin{tabular}[c]{@{}l@{}}
$\{(C_i,r_i)\}_{i=1}^N$ \\
circle centers and radii
\end{tabular}}
& density\_score
& $\displaystyle
\sum_i \pi r_i^2 \; / \; S^2
$ \\

&& center\_spread\_index
& $\displaystyle
\frac{1}{N}\sum_i \|C_i-(S/2,S/2)\|_2
$ \\

&& radius\_std\_normalized
& $\displaystyle
\mathrm{Std}(r) \; / \; \mathbb{E}[r]
$ \\

&& neighbor\_distance\_ratio
& $\displaystyle
\frac{1}{N}\sum_i \min_{j\neq i}\|C_i-C_j\|_2 \; / \; r_i
$ \\

&& large\_circle\_margin
& $\displaystyle
\frac{1}{|I|}\sum_{i\in I}
\big(\min(C_i^x,S-C_i^x,C_i^y,S-C_i^y)-r_i\big),
\; I=\{i:r_i>\mathbb{E}[r]\}
$ \\

&& pairwise\_radii\_product\_sum
& $\displaystyle
\sum_{i<j} r_i r_j
$ \\

&& centroid\_distance\_variance
& $\displaystyle
\mathrm{Var}\!\left(\|C_i-\mathbb{E}[C]\|_2\right)
$ \\

\midrule

\multirow{5}{*}{ADAS-AIME}
& \multirow{5}{*}{
\begin{tabular}[c]{@{}l@{}}
$(\text{accuracy},\text{cost},df)$ \\
evaluation records
\end{tabular}}
& boxed\_format\_rate
& $\displaystyle
100\cdot\frac{1}{N}\sum_i
\mathbf{1}[\text{response}_i\text{ contains }\boxed{\cdot}]
$ \\

&& three\_digit\_answer\_rate
& $\displaystyle
100\cdot\frac{1}{N}\sum_i \mathbf{1}[\text{len(answer}_i)=3]
$ \\

&& cost\_efficiency
& $\displaystyle
\text{accuracy} \; / \; \sum_i \text{cost}_i
$ \\

&& accuracy\_variance
& $\displaystyle
\mathrm{Var}(\text{accuracy over sliding windows})
$ \\

&& max\_consecutive\_errors
& $\displaystyle
\max_k \sum_{t=t_k}^{t_k+\ell}\mathbf{1}[\text{incorrect}_t]
$ \\

\bottomrule
\end{tabular}
}
\end{table*}

%% file: ref_0_ai_sci.bib
@inproceedings{lilarge,
  title={Are Large Language Models Ready for Multi-Turn Tabular Data Analysis?},
  author={Li, Jinyang and Huo, Nan and Gao, Yan and Shi, Jiayi and Zhao, Yingxiu and Qu, Ge and Qin, Bowen and Wu, Yurong and Li, Xiaodong and Ma, Chenhao and others},
  booktitle={Forty-second International Conference on Machine Learning},
  year={2025}
}

@article{zhang2023data,
  title={Data-copilot: Bridging billions of data and humans with autonomous workflow},
  author={Zhang, Wenqi and Shen, Yongliang and Lu, Weiming and Zhuang, Yueting},
  journal={arXiv preprint arXiv:2306.07209},
  year={2023}
}

@article{li2023sheetcopilot,
  title={Sheetcopilot: Bringing software productivity to the next level through large language models},
  author={Li, Hongxin and Su, Jingran and Chen, Yuntao and Li, Qing and Zhang, Zhao-Xiang},
  journal={Advances in Neural Information Processing Systems},
  volume={36},
  pages={4952--4984},
  year={2023}
}

@article{zha2023tablegpt,
  title={Tablegpt: Towards unifying tables, nature language and commands into one gpt},
  author={Zha, Liangyu and Zhou, Junlin and Li, Liyao and Wang, Rui and Huang, Qingyi and Yang, Saisai and Yuan, Jing and Su, Changbao and Li, Xiang and Su, Aofeng and others},
  journal={arXiv preprint arXiv:2307.08674},
  year={2023}
}

@article{roch2020chemos,
  title={ChemOS: An orchestration software to democratize autonomous discovery},
  author={Roch, Lo{\"i}c M. and others},
  journal={PLOS ONE},
  volume={15},
  number={4},
  pages={e0229862},
  year={2020}
}

@article{mandal2025aila,
  title={Artificially Intelligent Lab Assistant for Automated Experimentation},
  author={Mandal, Shubham and others},
  journal={Nature Communications},
  volume={16},
  pages={1234},
  year={2025}
}

@article{shojaee2025llmsr,
  title={Scientific Equation Discovery via Programming with Large Language Models},
  author={Shojaee, Parshin and others},
  journal={arXiv preprint arXiv:2404.18400},
  year={2025}
}

@article{romeraparedes2024funsearch,
  title={Mathematical discoveries from program search with large language models},
  author={Romera-Paredes, Bernardino and others},
  journal={Nature},
  volume={625},
  pages={468--475},
  year={2024}
}

@article{lange2025shinkaevolve,
  title={Towards Open-Ended and Sample-Efficient Program Evolution},
  author={Lange, Robert and others},
  journal={arXiv preprint arXiv:2509.19349},
  year={2025}
}

@article{liang2024can,
  title={Can large language models provide useful feedback on research papers? A large-scale empirical analysis},
  author={Liang, Weixin and Zhang, Yuhui and Cao, Hancheng and Wang, Binglu and Ding, Daisy Yi and Yang, Xinyu and Vodrahalli, Kailas and He, Siyu and Smith, Daniel Scott and Yin, Yian and others},
  journal={NEJM AI},
  volume={1},
  number={8},
  pages={AIoa2400196},
  year={2024},
  publisher={Massachusetts Medical Society}
}

@inproceedings{wang2024scimon,
  title={Scimon: Scientific inspiration machines optimized for novelty},
  author={Wang, Qingyun and Downey, Doug and Ji, Heng and Hope, Tom},
  booktitle={Proceedings of the 62nd Annual Meeting of the Association for Computational Linguistics (Volume 1: Long Papers)},
  pages={279--299},
  year={2024}
}

@inproceedings{yang2024large,
  title={Large language models for automated open-domain scientific hypotheses discovery},
  author={Yang, Zonglin and Du, Xinya and Li, Junxian and Zheng, Jie and Poria, Soujanya and Cambria, Erik},
  booktitle={Findings of the Association for Computational Linguistics: ACL 2024},
  pages={13545--13565},
  year={2024}
}

@article{li2024chain,
  title={Chain of ideas: Revolutionizing research via novel idea development with llm agents},
  author={Li, Long and Xu, Weiwen and Guo, Jiayan and Zhao, Ruochen and Li, Xingxuan and Yuan, Yuqian and Zhang, Boqiang and Jiang, Yuming and Xin, Yifei and Dang, Ronghao and others},
  journal={arXiv preprint arXiv:2410.13185},
  year={2024}
}

@article{li2024critical,
  title={CriticAL: Critic Automation with Language Models},
  author={Li, Michael Y and Vajipey, Vivek and Goodman, Noah D and Fox, Emily B},
  journal={arXiv preprint arXiv:2411.06590},
  year={2024}
}

@article{yamada2025ai,
  title={The ai scientist-v2: Workshop-level automated scientific discovery via agentic tree search},
  author={Yamada, Yutaro and Lange, Robert Tjarko and Lu, Cong and Hu, Shengran and Lu, Chris and Foerster, Jakob and Clune, Jeff and Ha, David},
  journal={arXiv preprint arXiv:2504.08066},
  year={2025}
}

@article{lu2024ai,
  title={The ai scientist: Towards fully automated open-ended scientific discovery},
  author={Lu, Chris and Lu, Cong and Lange, Robert Tjarko and Foerster, Jakob and Clune, Jeff and Ha, David},
  journal={arXiv preprint arXiv:2408.06292},
  year={2024}
}

@article{yang2025multi,
  title={From what to why: A multi-agent system for evidence-based chemical reaction condition reasoning},
  author={Yang, Cheng and Lu, Jiaxuan and Wan, Haiyuan and Yu, Junchi and Qin, Feiwei},
  journal={ICLR},
  year={2026}
}

@article{Georgiev2025MathematicalEA,
  title={Mathematical exploration and discovery at scale},
  author={Bogdan Georgiev and Javier G'omez-Serrano and Terence Tao and Adam Zsolt Wagner},
  journal={ArXiv},
  year={2025},
  volume={abs/2511.02864},
}

@article{Bubeck2025EarlySA,
  title={Early science acceleration experiments with GPT-5},
  author={S{\'e}bastien Bubeck and Christian Coester and Ronen Eldan and Timothy Gowers and Yin Tat Lee and Alexandru Lupsasca and Mehtaab Sawhney and Robert Scherrer and Mark Sellke and Brian K. Spears and Derya Unutmaz and Kevin Weil and Steven Yin and Nikita Zhivotovskiy},
  journal={ArXiv},
  year={2025},
  volume={abs/2511.16072},
}

@article{Cheng2025BarbariansAT,
  title={Barbarians at the Gate: How AI is Upending Systems Research},
  author={Audrey Cheng and Shu Liu and Melissa Z. Pan and Zhifei Li and Bowen Wang and Alexander Krentsel and Tian Xia and Mert Cemri and Jongseok Park and Shuo Yang and Jeff Chen and Lakshya A Agrawal and Aditya Desai and Jiarong Xing and Koushik Sen and Matei Zaharia and Ion Stoica},
  journal={ArXiv},
  year={2025},
  volume={abs/2510.06189},
}

@misc{openevolve,
  title = {OpenEvolve: an open-source evolutionary coding agent},
  author = {Asankhaya Sharma},
  year = {2025},
  publisher = {GitHub},
  url = {https://github.com/algorithmicsuperintelligence/openevolve}
}

@article{Woodruff2026AcceleratingSR,
  title={Accelerating Scientific Research with Gemini: Case Studies and Common Techniques},
  author={David P. Woodruff and Vincent Cohen-Addad and Lalit Jain and Jieming Mao and Song Zuo and Mohammad Rez Bateni and Simina Br{\^a}nzei and Michael P. Brenner and Lin Chen and Ying Feng and Lance Fortnow and Gang Fu and Ziyi Guan and Zahra Hadizadeh and Mohammad Taghi Hajiaghayi and Mahdi JafariRaviz and Adel Javanmard and S. KarthikC. and Ken-ichi Kawarabayashi and Ravi Kumar and Silvio Lattanzi and Euiwoong Lee and Yi Li and Ioannis Panageas and Dimitris Paparas and Benjamin Przybocki and Bernardo Subercaseaux and Ola Svensson and Shayan Taherijam and Xuan Wu and Eylon Yogev and Morteza Zadimoghaddam and Samson Zhou and Vahab S. Mirrokni},
  year={2026},
  journal={ArXiv},
  volume={abs/2602.03837},
}

@article{wan2025deepresearch,
  title={DeepResearch Arena: The First Exam of LLMs' Research Abilities via Seminar-Grounded Tasks},
  author={Wan, Haiyuan and Yang, Chen and Yu, Junchi and Tu, Meiqi and Lu, Jiaxuan and Yu, Di and Cao, Jianbao and Gao, Ben and Xie, Jiaqing and Wang, Aoran and others},
  journal={AAAI},
  year={2026}
}

@article{swanson2025virtual,
  title={The Virtual Lab of AI agents designs new SARS-CoV-2 nanobodies},
  author={Swanson, Kyle and Wu, Wesley and Bulaong, Nash L and Pak, John E and Zou, James},
  journal={Nature},
  volume={646},
  number={8085},
  pages={716--723},
  year={2025},
  publisher={Nature Publishing Group UK London}
}

@article{truhn2026artificial,
  title={Artificial intelligence agents in cancer research and oncology},
  author={Truhn, Daniel and Azizi, Shekoofeh and Zou, James and Cerda-Alberich, Leonor and Mahmood, Faisal and Kather, Jakob Nikolas},
  journal={Nature Reviews Cancer},
  pages={1--14},
  year={2026},
  publisher={Nature Publishing Group UK London}
}

@article{feng2025earth,
  title={Earth-agent: Unlocking the full landscape of earth observation with agents},
  author={Feng, Peilin and Lv, Zhutao and Ye, Junyan and Wang, Xiaolei and Huo, Xinjie and Yu, Jinhua and Xu, Wanghan and Zhang, Wenlong and Bai, Lei and He, Conghui and others},
  journal={arXiv preprint arXiv:2509.23141},
  year={2025}
}

@article{boiko2023autonomous,
  title={Autonomous chemical research with large language models},
  author={Boiko, Daniil A and MacKnight, Robert and Kline, Ben and Gomes, Gabe},
  journal={Nature},
  volume={624},
  number={7992},
  pages={570--578},
  year={2023},
  publisher={Nature Publishing Group UK London}
}

@article{tom2024self,
  title={Self-driving laboratories for chemistry and materials science},
  author={Tom, Gary and Schmid, Stefan P and Baird, Sterling G and Cao, Yang and Darvish, Kourosh and Hao, Han and Lo, Stanley and Pablo-Garc{\'\i}a, Sergio and Rajaonson, Ella M and Skreta, Marta and others},
  journal={Chemical Reviews},
  volume={124},
  number={16},
  pages={9633--9732},
  year={2024},
  publisher={ACS Publications}
}

@article{zhu2022all,
  title={An all-round AI-Chemist with a scientific mind},
  author={Zhu, Qing and Zhang, Fei and Huang, Yan and Xiao, Hengyu and Zhao, LuYuan and Zhang, XuChun and Song, Tao and Tang, XinSheng and Li, Xiang and He, Guo and others},
  journal={National Science Review},
  volume={9},
  number={10},
  pages={nwac190},
  year={2022},
  publisher={Oxford University Press}
}

@inproceedings{yangmoose,
  title={MOOSE-Chem: Large Language Models for Rediscovering Unseen Chemistry Scientific Hypotheses},
  author={Yang, Zonglin and Liu, Wanhao and Gao, Ben and Xie, Tong and Li, Yuqiang and Ouyang, Wanli and Poria, Soujanya and Cambria, Erik and Zhou, Dongzhan},
  booktitle={ICLR},
  year={2025}
}

@article{huang2025deep,
  title={Deep research agents: A systematic examination and roadmap},
  author={Huang, Yuxuan and Chen, Yihang and Zhang, Haozheng and Li, Kang and Zhou, Huichi and Fang, Meng and Yang, Linyi and Li, Xiaoguang and Shang, Lifeng and Xu, Songcen and others},
  journal={arXiv preprint arXiv:2506.18096},
  year={2025}
}

@inproceedings{huangautomated,
  title={Automated Hypothesis Validation with Agentic Sequential Falsifications},
  author={Huang, Kexin and Jin, Ying and Li, Ryan and Li, Michael Y and Candes, Emmanuel and Leskovec, Jure},
  booktitle={ICML},
  year={2025}
}

@article{gottweis2025towards,
  title={Towards an AI co-scientist},
  author={Gottweis, Juraj and Weng, Wei-Hung and Daryin, Alexander and Tu, Tao and Palepu, Anil and Sirkovic, Petar and Myaskovsky, Artiom and Weissenberger, Felix and Rong, Keran and Tanno, Ryutaro and others},
  journal={arXiv preprint arXiv:2502.18864},
  year={2025}
}

@article{novikov2025alphaevolve,
  title={AlphaEvolve: A coding agent for scientific and algorithmic discovery},
  author={Novikov, Alexander and V{\~u}, Ng{\^a}n and Eisenberger, Marvin and Dupont, Emilien and Huang, Po-Sen and Wagner, Adam Zsolt and Shirobokov, Sergey and Kozlovskii, Borislav and Ruiz, Francisco JR and Mehrabian, Abbas and others},
  journal={arXiv preprint arXiv:2506.13131},
  year={2025}
}

@article{Lange2025ShinkaEvolveTO,
  title={ShinkaEvolve: Towards Open-Ended And Sample-Efficient Program Evolution},
  author={Robert Tjarko Lange and Yuki Imajuku and Edoardo Cetin},
  journal={ArXiv},
  year={2025},
  volume={abs/2509.19349},
}

@article{Wang2025ThetaEvolveTL,
  title={ThetaEvolve: Test-time Learning on Open Problems},
  author={Yiping Wang and Shao-Rong Su and Zhiyuan Zeng and Eva Xu and Liliang Ren and Xinyu Yang and Zeyi Huang and Xuehai He and Luyao Ma and Baolin Peng and Hao Cheng and Pengcheng He and Weizhu Chen and Shuohang Wang and Simon Shaolei Du and Yelong Shen},
  journal={ArXiv},
  year={2025},
  volume={abs/2511.23473},
}

@article{ZHENG2025FromAT,
  title={From Automation to Autonomy: A Survey on Large Language Models in Scientific Discovery},
  author={Tianshi ZHENG and Zheye Deng and Hong Ting Tsang and Weiqi Wang and Jiaxin Bai and Zihao Wang and Yangqiu Song},
  journal={ArXiv},
  year={2025},
  volume={abs/2505.13259},
}

@article{Mitchener2025KosmosAA,
  title={Kosmos: An AI Scientist for Autonomous Discovery},
  author={Ludovico Mitchener and Angela Yiu and Benjamin Chang and Mathieu Bourdenx and Tyler Nadolski and Arvis Sulovari and Eric C. Landsness and D{\'a}niel L. Barab{\'a}si and Siddharth Narayanan and Nicky Evans and Shriya Reddy and Martha S. Foiani and Aizad Kamal and Leah P. Shriver and Fang Cao and Asmamaw T. Wassie and Jon M. Laurent and Edwin Melville-Green and Mayk Caldas Ramos and Albert Bou and Kaleigh F. Roberts and Sladjana Zagorac and Timothy C. Orr and Miranda E. Orr and Kevin J. Zwezdaryk and Ali E. Ghareeb and Laurie McCoy and Bruna Gomes and Euan A Ashley and Karen E. Duff and Tonio Buonassisi and Tom Rainforth and Randall J. Bateman and Michael Skarlinski and Samuel G. Rodriques and Michaela M. Hinks and Andrew D. White},
  journal={ArXiv},
  year={2025},
  volume={abs/2511.02824},
}

@inproceedings{causalcoat2024,
     author = {Liu, Chenxi and Chen, Yongqiang and Liu, Tongliang and Gong, Mingming and Cheng, James and Han, Bo and Zhang, Kun},
     booktitle = {Advances in Neural Information Processing Systems},
     editor = {A. Globerson and L. Mackey and D. Belgrave and A. Fan and U. Paquet and J. Tomczak and C. Zhang},
     pages = {102307--102365},
     publisher = {Curran Associates, Inc.},
     title = {Discovery of the Hidden World with Large Language Models},
     url = {https://proceedings.neurips.cc/paper_files/paper/2024/file/b99a07486702417d3b1bd64ec2cf74ad-Paper-Conference.pdf},
     volume = {37},
     year = {2024}
}

@article{Li2025CanLL,
    title={Can Large Language Models Help Experimental Design for Causal Discovery?},
    author={Junyi Li and Yongqiang Chen and Chenxi Liu and Qianyi Cai and Tongliang Liu and Bo Han and Kun Zhang and Hui Xiong},
    year={2025},
    journal={ArXiv},
    volume={abs/2503.01139},
    url={https://arxiv.org/abs/2503.01139}
}

@article{chan2024mle-bench,
  title={MLE-bench: Evaluating Machine Learning Agents on Machine Learning Engineering},
  author={Jun Shern Chan and Neil Chowdhury and Oliver Jaffe and James Aung and Dane Sherburn and Evan Mays and Giulio Starace and Kevin Liu and Leon Maksin and Tejal Patwardhan and Lilian Weng and Aleksander Madry},
  year={2024},
  eprint={2410.07095},
  archivePrefix={arXiv},
  primaryClass={cs.CL},
  url={https://arxiv.org/abs/2410.07095}
}

@inproceedings{
verma2025causal,
title={Causal {AI} Scientist: Facilitating Causal Data Science with Large Language Models},
author={Vishal Verma and Sawal Acharya and Devansh Bhardwaj and Samuel Simko and Yongjin Yang and Anahita Haghighat and Dominik Janzing and Mrinmaya Sachan and Bernhard Sch{\"o}lkopf and Zhijing Jin},
booktitle={NeurIPS 2025 Workshop on CauScien: Uncovering Causality in Science},
year={2025},
url={https://openreview.net/forum?id=EDWTHMVOCj}
}


%% file: ref_1_llm.bib
@misc{xai_grok4_1_fast_2025,
  author       = {{xAI}},
  title        = {Grok 4.1 Fast and Agent Tools API},
  year         = {2025},
  date         = {2025-11-19},
  url          = {https://x.ai/news/grok-4-1-fast},
  organization = {xAI},
  note         = {Accessed: 2026-02-10},
}


%% file: ref_2_causality.bib
@article{perry2022causal,
  title={Causal discovery in heterogeneous environments under the sparse mechanism shift hypothesis},
  author={Perry, Ronan and Von K{\"u}gelgen, Julius and Sch{\"o}lkopf, Bernhard},
  journal={Advances in Neural Information Processing Systems},
  volume={35},
  pages={10904--10917},
  year={2022}
}

@article{mooij2020joint,
  title={Joint causal inference from multiple contexts},
  author={Mooij, Joris M and Magliacane, Sara and Claassen, Tom},
  journal={Journal of machine learning research},
  volume={21},
  number={99},
  pages={1--108},
  year={2020}
}

@article{brouillard2020differentiable,
  title={Differentiable causal discovery from interventional data},
  author={Brouillard, Philippe and Lachapelle, S{\'e}bastien and Lacoste, Alexandre and Lacoste-Julien, Simon and Drouin, Alexandre},
  journal={Advances in Neural Information Processing Systems},
  volume={33},
  pages={21865--21877},
  year={2020}
}

@inproceedings{yang2018characterizing,
  title={Characterizing and learning equivalence classes of causal dags under interventions},
  author={Yang, Karren and Katcoff, Abigail and Uhler, Caroline},
  booktitle={International Conference on Machine Learning},
  pages={5541--5550},
  year={2018},
  organization={PMLR}
}

@inproceedings{li2025efficient,
  title={Efficient and Trustworthy Causal Discovery with Latent Variables and Complex Relations},
  author={Li, Xiu-Chuan and Liu, Tongliang},
  booktitle={The Thirteenth International Conference on Learning Representations},
  year={2025}
}

@inproceedings{li2025recovery,
  title={Recovery of causal graph involving latent variables via homologous surrogates},
  author={Li, Xiu-Chuan and Wang, Jun and Liu, Tongliang},
  booktitle={The Thirteenth International Conference on Learning Representations},
  year={2025}
}

@inproceedings{liu2023causal,
  title={Causal structure learning for latent intervened non-stationary data},
  author={Liu, Chenxi and Kuang, Kun},
  booktitle={International Conference on Machine Learning},
  pages={21756--21777},
  year={2023},
  organization={PMLR}
}

@article{huang2020causal,
  title={Causal discovery from heterogeneous/nonstationary data},
  author={Huang, Biwei and Zhang, Kun and Zhang, Jiji and Ramsey, Joseph and Sanchez-Romero, Ruben and Glymour, Clark and Sch{\"o}lkopf, Bernhard},
  journal={Journal of Machine Learning Research},
  volume={21},
  number={89},
  pages={1--53},
  year={2020}
}

@inproceedings{huang2019causal,
  title={Causal discovery and forecasting in nonstationary environments with state-space models},
  author={Huang, Biwei and Zhang, Kun and Gong, Mingming and Glymour, Clark},
  booktitle={International conference on machine learning},
  pages={2901--2910},
  year={2019},
  organization={Pmlr}
}

@inproceedings{malinsky2019learning,
  title={Learning the structure of a nonstationary vector autoregression},
  author={Malinsky, Daniel and Spirtes, Peter},
  booktitle={The 22nd International Conference on Artificial Intelligence and Statistics},
  pages={2986--2994},
  year={2019},
  organization={PMLR}
}

@article{hoyer2008nonlinear,
  title={Nonlinear causal discovery with additive noise models},
  author={Hoyer, Patrik and Janzing, Dominik and Mooij, Joris M and Peters, Jonas and Sch{\"o}lkopf, Bernhard},
  journal={Advances in neural information processing systems},
  volume={21},
  year={2008}
}

@article{zhang2012identifiability,
  title={On the identifiability of the post-nonlinear causal model},
  author={Zhang, Kun and Hyvarinen, Aapo},
  journal={arXiv preprint arXiv:1205.2599},
  year={2012}
}

@article{shimizu2006linear,
  title={A linear non-Gaussian acyclic model for causal discovery.},
  author={Shimizu, Shohei and Hoyer, Patrik O and Hyv{\"a}rinen, Aapo and Kerminen, Antti and Jordan, Michael},
  journal={Journal of Machine Learning Research},
  volume={7},
  number={10},
  year={2006}
}

@inproceedings{spirtes1995causal,
  title={Causal inference in the presence of latent variables and selection bias},
  author={Spirtes, Peter and Meek, Christopher and Richardson, Thomas},
  booktitle={Proceedings of the Eleventh conference on Uncertainty in artificial intelligence},
  pages={499--506},
  year={1995}
}

@book{pearl2009causality,
  title={Causality},
  author={Pearl, Judea},
  year={2009},
  publisher={Cambridge university press}
}

@article{greenland1999causal,
  title={Causal diagrams for epidemiologic research},
  author={Greenland, Sander and Pearl, Judea and Robins, James M},
  journal={Epidemiology},
  volume={10},
  number={1},
  pages={37--48},
  year={1999},
  publisher={LWW}
}

@book{spirtes2000causation,
  title={Causation, prediction, and search},
  author={Spirtes, Peter and Glymour, Clark N and Scheines, Richard},
  year={2000},
  publisher={MIT press}
}

@inproceedings{li2024realtcd,
  title={Realtcd: Temporal causal discovery from interventional data with large language model},
  author={Li, Peiwen and Wang, Xin and Zhang, Zeyang and Meng, Yuan and Shen, Fang and Li, Yue and Wang, Jialong and Li, Yang and Zhu, Wenwu},
  booktitle={Proceedings of the 33rd ACM International Conference on Information and Knowledge Management},
  pages={4669--4677},
  year={2024}
}

@article{shen2024exploring,
  title={Exploring multi-modal integration with tool-augmented llm agents for precise causal discovery},
  author={Shen, C and Chen, Zhengzhang and Luo, Dongsheng and Xu, Dongkuan and Chen, Haifeng and Ni, Jingchao},
  journal={arXiv preprint arXiv:2412.13667},
  volume={1},
  number={3},
  year={2024}
}

@article{khatibi2024alcm,
  title={Alcm: Autonomous llm-augmented causal discovery framework},
  author={Khatibi, Elahe and Abbasian, Mahyar and Yang, Zhongqi and Azimi, Iman and Rahmani, Amir M},
  journal={arXiv preprint arXiv:2405.01744},
  year={2024}
}

@article{long2023causal,
  title={Causal discovery with language models as imperfect experts},
  author={Long, Stephanie and Pich{\'e}, Alexandre and Zantedeschi, Valentina and Schuster, Tibor and Drouin, Alexandre},
  journal={arXiv preprint arXiv:2307.02390},
  year={2023}
}

@article{ban2023causal,
  title={Causal structure learning supervised by large language model},
  author={Ban, Taiyu and Chen, Lyuzhou and Lyu, Derui and Wang, Xiangyu and Chen, Huanhuan},
  journal={arXiv preprint arXiv:2311.11689},
  year={2023}
}

@inproceedings{lirevealing,
  title={Revealing Multimodal Causality with Large Language Models},
  author={Li, Jin and Wang, Shoujin and Zhang, Qi and Liu, Feng and Liu, Tongliang and Cao, Longbing and Yu, Shui and Chen, Fang},
  booktitle={The Thirty-ninth Annual Conference on Neural Information Processing Systems},
  year={2025}
}

@inproceedings{abdulaal2023causal,
  title={Causal modelling agents: Causal graph discovery through synergising metadata-and data-driven reasoning},
  author={Abdulaal, Ahmed and Montana-Brown, Nina and He, Tiantian and Ijishakin, Ayodeji and Drobnjak, Ivana and Castro, Daniel C and Alexander, Daniel C and others},
  booktitle={The Twelfth International Conference on Learning Representations},
  year={2023}
}

@article{wang2025causal,
  title={Causal-copilot: An autonomous causal analysis agent},
  author={Wang, Xinyue and Zhou, Kun and Wu, Wenyi and Singh, Har Simrat and Nan, Fang and Jin, Songyao and Philip, Aryan and Patnaik, Saloni and Zhu, Hou and Singh, Shivam and others},
  journal={arXiv preprint arXiv:2504.13263},
  year={2025}
}

@inproceedings{sheth2025can,
  title={Can LLMs Propose Instrumental Variables for Causal Reasoning?},
  author={Sheth, Ivaxi and Jin, Zhijing and Wilder, Bryan and Janzing, Dominik and Fritz, Mario},
  booktitle={NeurIPS 2025 Workshop on CauScien: Uncovering Causality in Science}
}

@misc{coat2025discoveringreasoningcausalityhidden,
      title={Discovering and Reasoning of Causality in the Hidden World with Large Language Models}, 
      author={Chenxi Liu and Yongqiang Chen and Tongliang Liu and Mingming Gong and James Cheng and Bo Han and Kun Zhang},
      year={2025},
      eprint={2402.03941},
      archivePrefix={arXiv},
      primaryClass={cs.LG},
      url={https://arxiv.org/abs/2402.03941}, 
}

@article{dhawan2024end,
  title={End-to-end causal effect estimation from unstructured natural language data},
  author={Dhawan, Nikita and Cotta, Leonardo and Ullrich, Karen and Krishnan, Rahul G and Maddison, Chris J},
  journal={Advances in Neural Information Processing Systems},
  volume={37},
  pages={77165--77199},
  year={2024}
}

@article{vashishtha2023causal,
  title={Causal ordering as a robust interface for integrating expert knowledge},
  author={Vashishtha, Siddharth and others},
  journal={Advances in Neural Information Processing Systems},
  year={2023}
}

@article{jiralerspong2024efficient,
  title={Efficient causal graph discovery using large language models},
  author={Jiralerspong, Krittanut and others},
  journal={arXiv preprint arXiv:2402.01207},
  year={2024}
}

@misc{ClarkOutline,
  author       = {Glymour, Clark},
  title        = {An Outline of the History of Methods of Discovering Causality},
  institution  = {Carnegie Mellon University, Department of Philosophy},
  date = {n.d.},
  url          = {https://www.cmu.edu/dietrich/philosophy/docs/glymour/an-outline-of-the-history-of-methods-of-discovering-causality.pdf},
  note         = {Accessed: 2026-01-29}
}

@book{wallace1981causality,
  title={Causality and Scientific Explanation},
  author={Wallace, W.A.},
  number={v. 2},
  isbn={9780819114815},
  lccn={81004834},
  series={Causality and Scientific Explanation},
  year={1981},
  publisher={University Press of America}
}

@article{sci_revolution,
  title={The Structure of Scientific Revolutions},
  author={Thomas S. Kuhn and David Hawkins},
  journal={American Journal of Physics},
  year={1963},
  volume={31},
  pages={554-555},
}

@article{khemakhem2020ice,
  title={Ice-beem: Identifiable conditional energy-based deep models based on nonlinear ica},
  author={Khemakhem, Ilyes and Monti, Ricardo and Kingma, Diederik and Hyvarinen, Aapo},
  journal={Conference and Workshop on Neural Information Processing Systems},
  volume={33},
  pages={12768--12778},
  year={2020}
}

@book{spirtes2000cps,
  title={Causation, prediction, and search},
  author={Spirtes, Peter and Glymour, Clark N and Scheines, Richard},
  year={2000},
  publisher={MIT press}
}

@InCollection{sep-abduction,
	author       =	{Douven, Igor},
	title        =	{{Abduction}},
	booktitle    =	{The {Stanford} Encyclopedia of Philosophy},
	editor       =	{Edward N. Zalta and Uri Nodelman},
	howpublished =	{\url{https://plato.stanford.edu/archives/win2025/entries/abduction/}},
	year         =	{2025},
	edition      =	{{W}inter 2025},
	publisher    =	{Metaphysics Research Lab, Stanford University}
}


%% file: ref_3_others.bib
@article{kaelbling1998planning,
  title={Planning and acting in partially observable stochastic domains},
  author={Kaelbling, Leslie Pack and Littman, Michael L and Cassandra, Anthony R},
  journal={Artificial intelligence},
  volume={101},
  number={1-2},
  pages={99--134},
  year={1998},
  publisher={Elsevier}
}

@article{guo2025deepseek,
  title={Deepseek-r1: Incentivizing reasoning capability in llms via reinforcement learning},
  author={Guo, Daya and Yang, Dejian and Zhang, Haowei and Song, Junxiao and Zhang, Ruoyu and Xu, Runxin and Zhu, Qihao and Ma, Shirong and Wang, Peiyi and Bi, Xiao and others},
  journal={arXiv preprint arXiv:2501.12948},
  year={2025}
}

@article{li2025system,
  title={From system 1 to system 2: A survey of reasoning large language models},
  author={Li, Zhong-Zhi and Zhang, Duzhen and Zhang, Ming-Liang and Zhang, Jiaxin and Liu, Zengyan and Yao, Yuxuan and Xu, Haotian and Zheng, Junhao and Wang, Pei-Jie and Chen, Xiuyi and others},
  journal={arXiv preprint arXiv:2502.17419},
  year={2025}
}

@article{plaat2025agentic,
  title={Agentic large language models, a survey},
  author={Plaat, Aske and van Duijn, Max and van Stein, Niki and Preuss, Mike and van der Putten, Peter and Batenburg, Kees Joost},
  journal={arXiv preprint arXiv:2503.23037},
  year={2025}
}

@book{datasetshift,
  title     = {Dataset shift in machine learning},
  author    = {Quinonero-Candela, Joaquin and Sugiyama, Masashi and Schwaighofer, Anton and Lawrence, Neil D},
  year      = {2008},
  publisher = {Mit Press}
}

@inproceedings{feat,
    title={Understanding and Improving Feature Learning for Out-of-Distribution Generalization},
    author={Yongqiang Chen and Wei Huang and Kaiwen Zhou and Yatao Bian and Bo Han and James Cheng},
    booktitle={Advances in Neural Information Processing Systems},
    year={2023},
}
